\DeclarePairedDelimiter\floor{\lfloor}{\rfloor}
\declaretheorem[name=Definition]{definition}
\icmltitlerunning{RMIX: Learning Risk-Sensitive Policies for Cooperative Reinforcement Learning Agents}
\begin{document}

\twocolumn[
\icmltitle{RMIX: Learning Risk-Sensitive Policies for Cooperative \\ Reinforcement Learning Agents}



\icmlsetsymbol{equal}{*}

\begin{icmlauthorlist}
\icmlauthor{Wei Qiu}{ntu}
\icmlauthor{Xinrun Wang}{ntu}
\icmlauthor{Runsheng Yu}{hkust}
\icmlauthor{Xu He}{ntu}
\icmlauthor{Rundong Wang}{ntu}
\icmlauthor{Bo An}{ntu}
\icmlauthor{Svetlana Obraztsova}{ntu}
\icmlauthor{Zinovi Rabinovich}{ntu}
\end{icmlauthorlist}

\icmlaffiliation{ntu}{School of Computer Science and Engineering, Nanyang Technological University, Singapore}
\icmlaffiliation{hkust}{Department of Computer Science and Engineering, Hong Kong University of Science and Technology, Hong Kong SAR, China}

\icmlcorrespondingauthor{Wei Qiu}{qiuw0008@e.ntu.edu.sg}

\icmlkeywords{Machine Learning, ICML}

\vskip 0.3in
]



\printAffiliationsAndNotice{}  

\begin{abstract}
Current value-based multi-agent reinforcement learning methods optimize individual Q values to guide individuals' behaviours via centralized training with decentralized execution (CTDE). However, such expected, i.e., risk-neutral, Q value is not sufficient even with CTDE due to the randomness of rewards and the uncertainty in environments, which causes the failure of these methods to train coordinating agents in complex environments. To address these issues, we propose RMIX, a novel cooperative MARL method with the Conditional Value at Risk (CVaR) measure over the learned distributions of individuals' Q values. Specifically, we first learn the return distributions of individuals to analytically calculate CVaR for decentralized execution. Then, to handle the temporal nature of the stochastic outcomes during executions, we propose a dynamic risk level predictor for risk level tuning. Finally, we optimize the CVaR policies with CVaR values used to estimate the target in TD error during centralized training and the CVaR values are used as auxiliary local rewards to update the local distribution via Quantile Regression loss. Empirically, we show that our method significantly outperforms state-of-the-art methods on challenging StarCraft II tasks, demonstrating enhanced coordination and improved sample efficiency.
\end{abstract}

\section{Introduction}\label{Intro}
Reinforcement learning (RL) has made remarkable advances in many domains, including arcade video games~\citep{mnih2015human}, complex continuous robot control~\citep{lillicrap2015continuous} and the game of Go~\citep{silver2017mastering1}. Recently, many researchers put their efforts to extend the RL methods into multi-agent systems (MASs), such as urban systems~\citep{singh2020hierarchical}, coordination of robot swarms~\citep{huttenrauch2017guided} and real-time strategy (RTS) video games~\citep{vinyals2019grandmaster}. Centralized training with decentralized execution (CTDE)~\citep{oliehoek2008optimal} has drawn enormous attention via training policies of each agent with access to global trajectories in a centralized way and executing actions given only the local observations of each agent in a decentralized way. Empowered by CTDE, several multi-agent RL (MARL) methods, including value-based and policy gradient-based, are proposed~\citep{foerster2017counterfactual,sunehag2017value,rashid2018qmix,son2019qtran}. These MARL methods propose decomposition techniques to factorize the global Q value either by structural constraints or by estimating state-values or inter-agent weights to conduct the global Q value estimation. Among these methods, VDN~\citep{sunehag2017value} and QMIX~\citep{rashid2018qmix} are representative methods that use additivity and monotonicity structure constraints, respectively. With relaxed structural constraints, QTRAN~\citep{son2019qtran} guarantees a more general factorization than VDN and QMIX. Recently, Qatten~\citep{yang2020qatten} incorporates multi-head attention to represent the global values.

Despite the merits, most of these works focus on decomposing the global Q value into individual Q values with different constraints and network architectures, but ignore the fact that such the expected, i.e., risk-neutral, Q value is not sufficient as optimistic actions executed by some agents can impede the team coordination such as imprudent actions in hostage rescue operations, which causes the failure of these methods to train coordinating agents in complex environments. Specifically, these methods only learn the expected values over returns~\citep{rashid2018qmix} and do not handle the high variance caused by events with extremely high/low rewards to agents but at small probabilities, which cause the inaccurate/insufficient estimations of the future returns. Therefore, instead of expected values, learning distributions of future returns, i.e., Q values, are more useful for agents to make decisions. Recently, QR-MIX~\citep{hu2020qr} decomposes the estimated joint return distribution~\citep{bellemare2017distributional,dabney2018IQN} into individual Q values. However, the policies in QR-MIX are still individual Q values. Even further, given that the environment is nonstationary from the perspective of each agent, decision-making over the agent's return distribution takes events of potential return into account, which makes agents able to address uncertainties in the environment compared with simply taking the expected values for execution. However, current MARL methods do not extensively investigate these aspects. 

Motivated by the previous reasons, we intend to extend the risk-sensitive\footnote{``Risk'' refers to the uncertainty of future outcomes~\citep{dabney2018IQN}.} RL~\citep{chow2014algorithms,keramati2019being,zhang2020cautious} to MARL settings, where risk-sensitive RL optimizes policies with a risk measure, such as variance, power formula measure value at risk (VaR) and conditional value at risk (CVaR). 
Among these risk measures, CVaR has been gaining popularity due to both theoretical and computational advantages~\citep{rockafellar2002conditional,ruszczynski2010risk}.
However, there are two main obstacles: i) most of the previous works focus on risk-neutral or static risk level in the single-agent settings, ignoring the randomness of reward and the temporal structure of agents’ trajectories~\citep{dabney2018IQN,tang2019worst,ma2020distributional,keramati2019being}; ii) many methods use risk measures over Q values for policy execution without getting the risk measure values used in policy optimization in temporal difference (TD) learning, which causes the global value factorization on expected individual values to have sub-optimal behaviours in MARL. 

In this paper, we propose RMIX, a novel cooperative MARL method with CVaR over the learned distributions of individuals' Q values. Specifically, our contributions are in three folds: (i) We first learn the return distributions of individuals by using Dirac Delta functions in order to analytically calculate CVaR for decentralized execution. The resulting CVaR values at each time step are used as policies for each agent via $\arg\max$ operation; (ii) We then propose a dynamic risk level predictor for CVaR calculation to handle the temporal nature of stochastic outcomes as well as tune the risk level during executions. The dynamic risk level predictor measures the discrepancy between the embedding of current individual return distributions and the embedding of historical return distributions. The dynamic risk levels are agent-specific and observation-wise; (iii)  As our method focuses on optimizing the CVaR policies via CTDE, we finally optimize CVaR policies with CVaR values as target estimators in TD error via centralized training and CVaR values are used as auxiliary local rewards to update local return distributions via Quantile Regression loss. These also allow our method to achieve temporally extended exploration and enhanced temporal coordination, which are key to solving complex multi-agent tasks. Empirically, we show that RMIX significantly outperforms state-of-the-art methods on many challenging StarCraft II\textsuperscript{TM}\footnote{StarCraft II is a trademark of Blizzard Entertainment, Inc.} tasks, demonstrating enhanced coordination in many \emph{symmetric} \& \emph{asymmetric} and \emph{homogeneous} \& \emph{heterogeneous} scenarios and revealing high sample efficiency. To the best of our knowledge, our work is the \emph{first} attempt to investigate cooperative MARL with risk-sensitive policies under the Dec-POMDP framework. 

\textbf{Related Works.} CTDE~\citep{oliehoek2008optimal} has drawn enormous attention via training policies of each agent with access to global trajectories in a centralized way and executing actions given only the local observations of each agent in a decentralized way. However, current MARL methods~\citep{lowe2017multi,foerster2017counterfactual,sunehag2017value,rashid2018qmix,son2019qtran,hu2020qr} neglect the limited representation of agent values, thus failing to consider the problem of random cost underlying the nonstationarity of the environment, a.k.a risk-sensitive learning. Recent advances in distributional RL~\citep{bellemare2017distributional,dabney2018distributional} focus on learning distribution over returns. However, these works still focus on either risk-neutral settings or with static risk level in single-agent setting. \citet{chow2014algorithms} considered the mean-CVaR optimization problem in MDPs and proposed policy gradient with CVaR, and ~\citet{garcia2015comprehensive} presented a survey on safe RL, which initiated the research on utilizing risk measures in RL~\citep{garcia2015comprehensive,tamar2015policy,tang2019worst,hiraoka2019learning,majumdar2020should,keramati2019being,ma2020distributional}. However, these works focus on single-agent settings. The merit of CVaR in optimization of MARL has yet to be investigated.

\section{Preliminaries}\label{background}
In this section, we provide the notation and the basic notions we will use in the following. We consider the probability space $\left(\Omega, \mathcal{F}, \text{Pr} \right)$, where $\Omega$ is the set of outcomes (sample space), $\mathcal{F}$ is a $\sigma$-algebra over $\Omega$ representing the set of events, and $\text{Pr}$ is the set of probability distributions. Given a set $\mathcal{X}$, we denote with $\mathscr{P}(\mathcal{X})$ the set of all probability measures over $\mathcal{X}$.

\textbf{Dec-POMDP.} A fully cooperative MARL problem can be described as a \textit{decentralised partially observable Markov decision process} (Dec-POMDP)~\citep{oliehoek2016concise} which can be formulated as a tuple $\mathcal{M} = \langle\mathcal{S},\mathcal{U},\mathcal{P},R, \Upsilon,O,\mathcal{N},\gamma\rangle$, where $\bm{s} \in \mathcal{S}$ denotes the state of the environment. Each agent $i \in \mathcal{N} := \{1,...,N\} $ chooses an action $u_i \in \mathcal{U}$ at each time step, giving rise to a joint action vector, $\bm{u} := [u_i]_{i=1}^N \in \mathcal{U}^N$. $\mathcal{P}(\bm{s}'|\bm{s},\bm{u}):\mathcal{S} \times \mathcal{U}^N \times \mathcal{S} \mapsto \mathcal{P}(\mathcal{S})$ is a Markovian transition function. Every agent shares the same joint reward function $R(\bm{s},\bm{u}): \mathcal{S} \times \mathcal{U}^N \mapsto \mathcal{R} $, and $\gamma \in [0,1)$ is the discount factor. Due to \textit{partial observability},   each agent has individual partial observation $\upsilon \in \Upsilon$, according to the observation function $O(\bm{s},i): \mathcal{S} \times \mathcal{N} \mapsto \Upsilon.$ Each agent learns its own policy $\pi_i(u_i|\tau_i) : \mathcal{T} \times \mathcal{U} \mapsto [0,1]$ given its action-observation history $\tau_i \in \mathcal{T} := (\Upsilon \times \mathcal{U})$.

\textbf{CVaR.} CVaR is a coherent risk measure and enjoys computational properties~\citep{rockafellar2002conditional} that are derived for loss distributions in discreet decision-making in finance. It gains popularity in various engineering and finance applications.
\begin{wrapfigure}{r}{0.2\textwidth}
\vspace{-0.3cm}
\includegraphics[width=\linewidth]{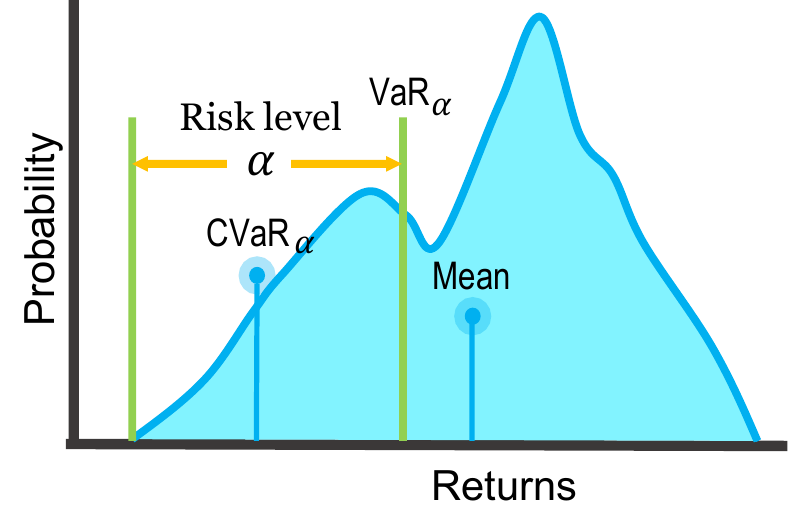}
\vspace{-0.6cm}
\caption{CVaR}\label{fig:cvar}
\vspace{-0.4cm}
\end{wrapfigure}
CVaR (Figure \ref{fig:cvar}) is the expectation of values that are less equal than the $\alpha$-percentile value of the distribution over returns. Formally, let $X \in \mathcal{X}$ be a bounded random variable with cumulative distribution function $F(x)= \mathscr{P} \left [X\leq x \right ]$ and the inverse CDF is $F^{-1}(u) = \inf\{ x : F(x) \geq u\}$. CVaR at level $\alpha \in (0,1]$ of a random variable $X$ is then defined as $\text{CVaR}_\alpha(X) := \sup_\nu \left\{\nu -    \frac{1}{\alpha}\mathbb{E}[{(\nu - X)^+]}\right\}$~\citep{rockafellar2000optimization} when $X$ is a discrete random variable. Correspondingly, $\text{CVaR}_\alpha(X) = \mathbb{E}_{X \sim F}
\left [X | X \leq F^{-1}(\alpha) \right ]$ \citep{acerbi2002coherence} when $X$ has a continuous distribution. The $\alpha$-percentile value is value at risk (VaR). For ease of notation, we write $\text{CVaR}$ as $\text{CVaR}_\alpha(F)$.

\begin{figure}[t]
    \begin{center}
        \includegraphics[width=0.4\textwidth]{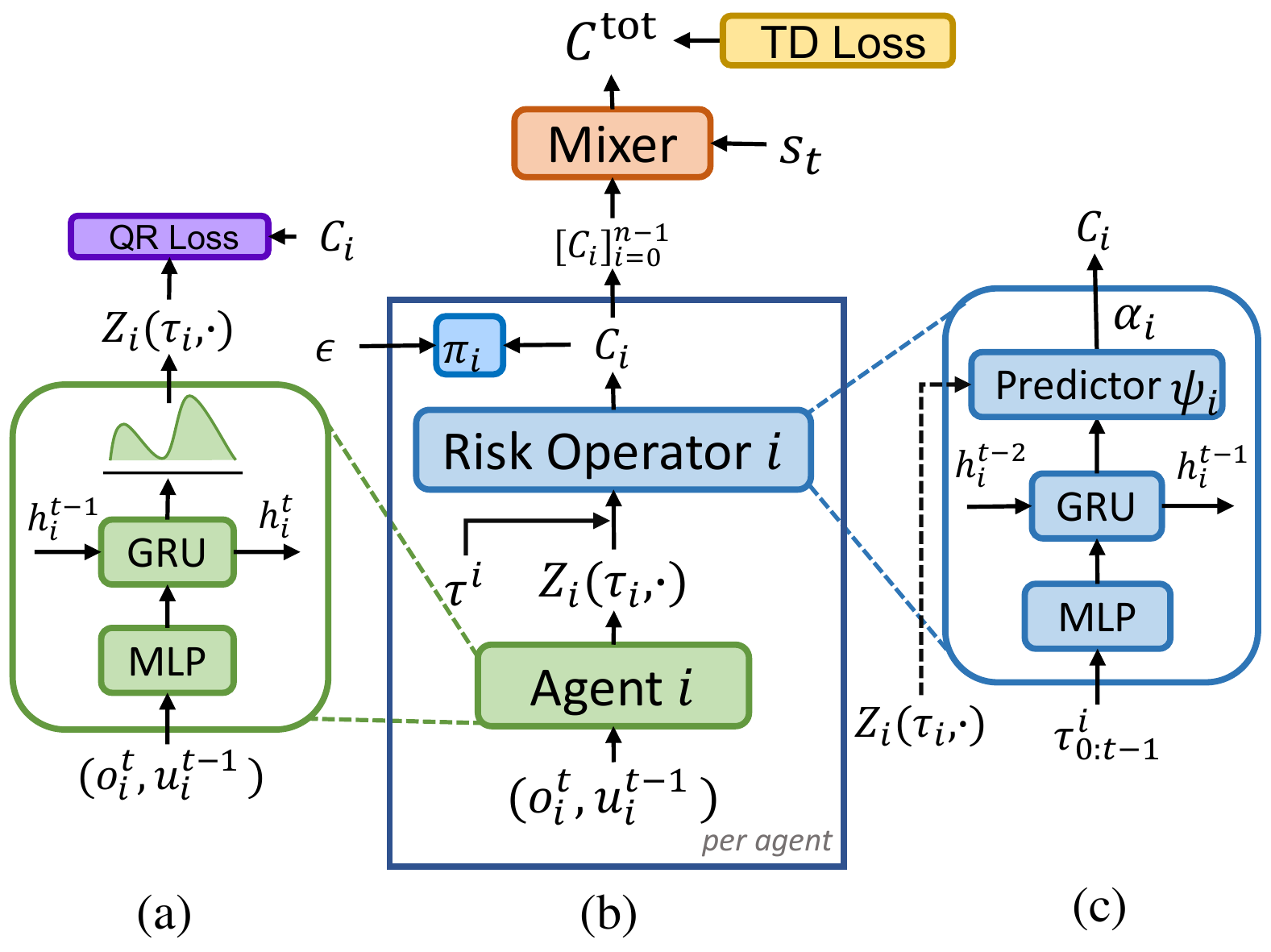}
        \caption{Our framework (dotted arrow indicates that gradients are blocked during training). (a) Agent's policy network. (b) The overall architecture (agent network and mixer). (c)  Risk operator. Each agent $i$ applies an individual risk operator $\Pi_{\alpha_i}$ on its return distribution $Z_i(\cdot, \cdot)$ to calculate $C_i(\cdot, \cdot, \cdot)$ for execution given risk level $\alpha_i$ predicted by the dynamic risk level predictor $\psi_i$. $\{C_i(\cdot, \cdot, \cdot)\}_{i=1}^{N}$ are fed into the mixer for centralized training. }\label{fig:rmix_arch}
    \end{center}
	\vspace{-0.5cm}
\end{figure}

\textbf{Risk-sensitive RL.} Risk-sensitive RL uses risk criteria over policy/value, which is a sub-field of the Safety RL~\citep{garcia2015comprehensive}. \citeauthor{von1947theory} proposed the expected utility theory where a decision policy behaves as though it is maximizing the expected value of some utility functions. The condition is satisfied when the decision policy is consistent and has a particular set of four axioms. This is the most pervasive notion of risk-sensitivity. A policy maximizing a linear utility function is called \textit{risk}-\textit{neutral}, whereas concave or convex utility functions give rise to \textit{risk}-\textit{averse} or \textit{risk}-\textit{seeking} policies, respectively. Many measures are used in RL such as CVaR~\citep{chow2015risk,dabney2018IQN} and power formula~\citep{dabney2018IQN}. However, few works have been done in MARL and it cannot be easily extended. Our work fills this gap.

\section{Methodology}\label{method}

\subsection{CVaR of Return Distribution}
\label{sec:cvar_estimation}
In this section, we describe how we estimate the CVaR value. The value of CVaR can be either estimated through sampling or computed from the parameterized return distribution~\citep{rockafellar2002conditional}. However, the sampling method is usually computationally expensive~\citep{tang2019worst}. Therefore, we let each agent learn a return distribution parameterized by a mixture of Dirac Delta ($\delta$) functions~\footnote{The Dirac Delta is a \textit{Generalized function} in the theory of distributions and not a function given the properties of it. We use the name \textit{Dirac Delta function} by convention.}, which is demonstrated to be highly expressive and computationally efficient~\citep{bellemare2017distributional}.
For convenience, we provide the definition of the \textit{Generalized Return Probability Density Function (PDF)}.
\begin{definition} (Generalized Return PDF). For a discrete random variable $R \in \left[-R_{\operatorname{max}}, R_{\operatorname{max}} \right]$ and probability mass function (PMF) $\mathscr{P}(R=r_k)$, where $r_k \in \left[-R_{\operatorname{max}}, R_{\operatorname{max}} \right]$, we define the generalized return PDF as: $f_{R}(r)=\sum\nolimits_{r_k \in R} \mathscr{P}_R(r_k)\delta(r-r_k)$. Note that for any $r_k \in R$, the probability of $R=r_k$ is given by the coefficient of the corresponding $\delta$ function, $\delta (r - r_k)$.
\end{definition}
We define the parameterized return distribution of each agent $i$ at time step $t$ as:
\begin{equation}
    Z^{t}_{i} (\tau_{i}, u^{t-1}_{i}) = \sum\nolimits_{j=1}^{M} \mathscr{P}_j(\tau_{i}, u^{t-1}_{i})\delta_j (\tau_{i}, u^{t-1}_{i})
\end{equation}
where $M$ is the number of Dirac Delta functions. $\delta_j (\tau_{i}, u^{t-1}_{i})$ is the $j$-th Dirac Delta function and indicates the estimated value which can be parameterized by neural networks in practice. $\mathscr{P}_j(\tau_{i}, u^{t-1}_{i})$ is the corresponding probability of the estimated value given local observations and actions. $\tau_i$ and $u^{t-1}_i$ are trajectories (up to that timestep) and actions of agent $i$, respectively. 
With the individual return distribution $Z^{t}_{i} (\tau_{i}, u^{t-1}_{i}) \in \mathcal{Z}$ and cumulative distribution function (CDF) $F_{Z_{i} (\tau_{i}, u^{t-1}_{i})}$, we define the CVaR operator $\Pi_{\alpha_{i}}$, at a risk level $\alpha_{i}$ ($\alpha_{i} \in (0, 1]$ and $i \in \mathcal{A}$), over return as\footnote{We will omit $t$ in the rest of the paper for notation brevity.} $C^{t}_{i}(\tau_i, u^{t-1}_i, \alpha_i)=\Pi_{\alpha^{t}_{i}} Z^{t}_{i} (\tau_{i}, u^{t-1}_{i}) := \operatorname{CVaR}_{\alpha^{t}_{i}} (F_{Z^{t}_{i} (\tau_{i}, u^{t-1}_{i})})$
where $C \in \mathcal{C}$. As we use CVaR on return distributions, it corresponds to risk-neutrality (expectation, $\alpha_i=1$) and indicates the improving degree of risk-aversion ($\alpha_i \rightarrow 0$). $\operatorname{CVaR}_{\alpha_{i}}$ can be estimated in a nonparametric way given ordering of Dirac Delta functions $\left\{\delta_{j}\right\}_{j=1}^{m}$~\citep{kolla2019concentration} by leveraging the individual distribution:
\begin{equation}\label{eq:cvar_risk}
    \operatorname{CVaR}_{\alpha_{i}}=\sum\nolimits_{j=1}^{m} \mathscr{P}_j \delta_{j} \bm{1}\left\{\delta_{j} \leq \hat{v}_{m, \alpha_{i}}\right\},
\end{equation}
where $\bm{1}\{\cdot\}$ is the indicator function and $\hat{v}_{m, \alpha_{i}}$ is estimated value at risk from $\hat{v}_{m, \alpha_{i}}= \floor{\delta_{m(1-\alpha_{i})}}$ with $\floor{\cdot}$ being floor function. This is a closed-form formulation and can be easily implemented in practice. The optimal action of agent $i$ can be calculated via $\arg\max_{u_i} C_{i} (\tau_{i}, u^{t-1}_{i}, \alpha_{i})$. We will introduce it in detail in Sec. \ref{sec:dynamic_risk_level}.
\begin{figure}[t]
    \vspace{-0.2cm}
    \begin{center}
        \includegraphics[width=0.24\textwidth]{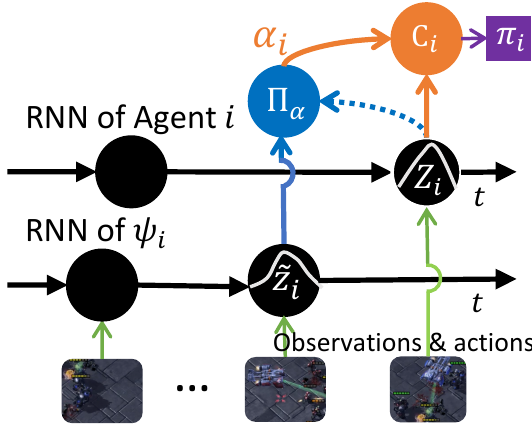}
        \caption{Agent architecture.}\label{fig:marl_agent}
    \end{center}
	\vspace{-0.5cm}
\end{figure}

\subsection{Risk Level Predictor}
\label{sec:dynamic_risk_level}

The values of risk levels, i.e., $\alpha_{i}$, $i\in\mathcal{A}$, are important for the agents to make decisions. Most of the previous works take a fixed value of risk level and do not take into account any temporal structure of agents' trajectories, which is hard to tune the best risk level and may impede centralized training in the evolving multi-agent environments. Therefore, we propose the dynamic risk level predictor, which determines the risk levels of agents by explicitly taking into account the temporal nature of the stochastic outcomes, to alleviate time-consistency issue~\citep{ruszczynski2010risk,iancu2015tight} and stabilize the centralized training. Specifically, we represent the risk operator $\Pi_{\alpha}$ by a deep neural network, which calculates the CVaR value with predicted dynamic risk level $\alpha$ over the return distribution.

We show the architecture of agent $i$ in Figure \ref{fig:marl_agent} (agent network and risk operator with risk level predictor $\psi_i$, as shown in Figure \ref{fig:rmix_arch}) and illustrate how $\psi_i$ works with  agent $i$ for CVaR calculation in practice in Figure \ref{fig:risk_predictor_main}. As depicted in Figure \ref{fig:risk_predictor_main}, at time step $t$, the agent's return distribution is $Z_i$ and its historical return distribution is $\Tilde{Z}_i$. Then we conduct the inner product to measure the discrepancy between the embedding of individual return distribution $f_{\operatorname{emb}}(Z_i)$ and the embedding of past trajectory $\phi_{i}(\tau^{0:t-1}_{i}, u^{t-1}_{i})$ modeled by GRU~\citep{chung2014empirical}. 
We discretize the risk level range into $K$ even ranges for the purpose of computing. The $k$-th dynamic risk level $\alpha^{k}_{i}$ is output from $\psi_i$ and the probability of $\alpha^{k}_{i}$ is defined as:
\begin{equation}\label{eq:alpha_cal}
    \mathscr{P}(\alpha^{k}_{i}) = \frac{\exp \left(\left<f_{\operatorname{emb}}(Z_i)^{k}, \phi_{i}^{k}\right>\right)}{\sum\nolimits_{k^{\prime}=0}^{K-1} \exp \left(\left<f_{\operatorname{emb}}(Z_i)^{k^{\prime}}, \phi_{i}^{k^{\prime}} \right>\right)}.
\end{equation}
Then we get the $k \in [1,\dots, K]$ with the maximal probability by $\arg\max$ and normalize it into $(0, 1]$, thus $\alpha_i=k/K$. The prediction risk level ${\alpha}_i$ is a scalar value and it is converted into a $K$-dimensional mask vector where the first $\floor{\alpha_i\times K}$ items are one and the rest are zero. This mask vector is used to calculate the CVaR value (Eqn. \ref{eq:cvar_risk}) of each action-return distribution that contains $K$ Dirac functions. Finally, we obtain $C_i$ and  the policy $\pi_i$ as illustrated in Figure \ref{fig:marl_agent}. During training, $f_{\operatorname{emb}_{i}}$ updates its weights and the gradients of $f_{\operatorname{emb}_{i}}$ are blocked (the dotted arrow in Figure \ref{fig:marl_agent}) in order to prevent changing the weights of the network of agent $i$.
\begin{figure}[t]
    \vspace{-0.2cm}
    \begin{center}
        \includegraphics[width=0.32\textwidth]{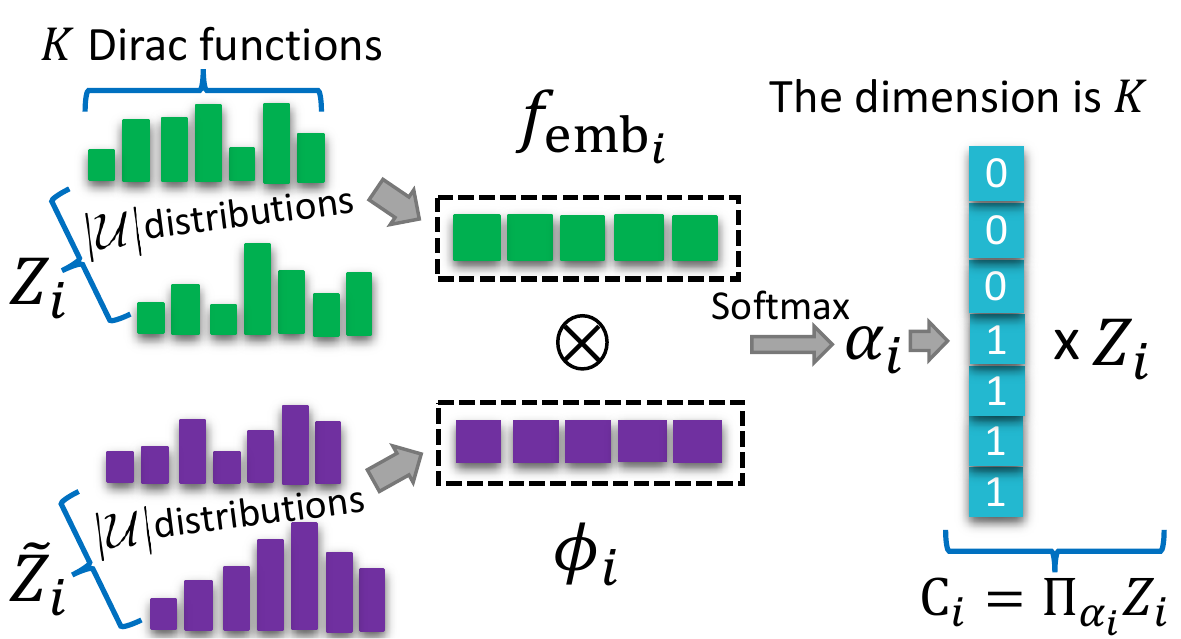}
        \caption{Risk level predictor $\psi_i$.}\label{fig:risk_predictor_main}
    \end{center}
	\vspace{-0.4cm}
\end{figure}
We note that the predictor differs from the attention network used in previous works~\citep{iqbal2019actor,yang2020qatten} because the agent's current return distribution and its return distribution of previous time step are separate inputs of their embeddings and there is no \textit{key}, \textit{query} and \textit{value} weight matrices. The dynamic risk level predictors allow agents to determine the risk level dynamically based on historical return distributions and it is a hyperparameter (e.g., $\alpha$) tuning strategy as well.

\subsection{Training}\label{sec:centralized_training}
We train RMIX via addressing the two challenging issues: credit assignment and return distribution learning. 

\textbf{Loss Function.} As there is only a global reward signal and agents have no access to individuals' reward, we first utilize the monotonic mixing network ($f_{m}$) from QMIX to do Credit assignment. $f_{m}$ enforces a monotonicity constraint on the relationship between $C^{\operatorname{tot}}$ and each $C_{i}$ for RMIX:
\begin{equation}
    \frac{\partial C^{\operatorname{tot}}}{\partial C_{i}} \geq 0, \forall i \in\{1,2, \ldots, N\},
\end{equation}
where $C^{\operatorname{tot}} = f_m (C_{1}(\cdot, \cdot, \cdot), \cdots, C_{N}(\cdot, \cdot, \cdot) )$ and $C_{i}(\tau_{i}, u_{i}, \alpha_i)$ is the individual CVaR value of agent $i$. To ease the confusion, the $C^{\operatorname{tot}}$ is not the global CVaR value as modeling global return distribution as well as local distribution are challenging with the credit assignment issue in Dec-POMDP problems, and the risk level values are locally decided and used during training. Then, to maximize the CVaR value of each agent, we define the risk-sensitive Bellman operator $\mathcal{T}$:
\begin{equation}\label{eq:cvar_bellman}
    \mathcal{T} C^{\operatorname{tot}} (\boldsymbol{s}, \boldsymbol{u}) := \mathbb{E}\left[ R(\boldsymbol{s}, \boldsymbol{u}) + \gamma \max_{\boldsymbol{u}^{\prime}} C^{\operatorname{tot}} (\boldsymbol{s}^{\prime}, \boldsymbol{u}^{\prime}) \right]
\end{equation}
The risk-sensitive Bellman operator $\mathcal{T}$ operates on the $C^{\operatorname{tot}}$ and the reward, which can be proved to be a contracting operation, as shown in Proposition~\ref{propContract}. 
\begin{restatable}{proposition}{PropContract}\label{propContract}
$\mathcal{T}: \mathcal{C} \mapsto \mathcal{C}$ is a $\gamma$-contraction.
\end{restatable}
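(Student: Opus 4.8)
The plan is to prove the statement as a supremum-norm contraction, treating $C^{\operatorname{tot}}$ simply as a bounded real-valued function of the joint pair $(\boldsymbol{s}, \boldsymbol{u})$ and equipping $\mathcal{C}$ with the norm $\|\cdot\|_\infty$. First I would fix two arbitrary value functions $C_1, C_2 \in \mathcal{C}$ and form the pointwise difference $\mathcal{T} C_1(\boldsymbol{s}, \boldsymbol{u}) - \mathcal{T} C_2(\boldsymbol{s}, \boldsymbol{u})$ at a fixed $(\boldsymbol{s}, \boldsymbol{u})$. Because the reward term $R(\boldsymbol{s}, \boldsymbol{u})$ in the operator definition (Eqn.~\ref{eq:cvar_bellman}) does not depend on the value function, it cancels exactly, leaving only the discounted expectation of the difference of the bootstrap maxima, $\gamma\,\mathbb{E}\!\left[\max_{\boldsymbol{u}'} C_1(\boldsymbol{s}', \boldsymbol{u}') - \max_{\boldsymbol{u}'} C_2(\boldsymbol{s}', \boldsymbol{u}')\right]$.

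The core estimate is the non-expansiveness of the maximum over the finite joint-action set: $\left|\max_{\boldsymbol{u}'} C_1(\boldsymbol{s}', \boldsymbol{u}') - \max_{\boldsymbol{u}'} C_2(\boldsymbol{s}', \boldsymbol{u}')\right| \le \max_{\boldsymbol{u}'} \left|C_1(\boldsymbol{s}', \boldsymbol{u}') - C_2(\boldsymbol{s}', \boldsymbol{u}')\right|$. Applying this inside the expectation and then bounding the inner maximum by the global quantity $\|C_1 - C_2\|_\infty$ removes all dependence on the successor state $\boldsymbol{s}'$, so the expectation over the transition kernel $\mathcal{P}$ collapses to that constant. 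This gives $\left|\mathcal{T} C_1(\boldsymbol{s}, \boldsymbol{u}) - \mathcal{T} C_2(\boldsymbol{s}, \boldsymbol{u})\right| \le \gamma\,\|C_1 - C_2\|_\infty$ uniformly in $(\boldsymbol{s}, \boldsymbol{u})$; taking the supremum on the left yields $\|\mathcal{T} C_1 - \mathcal{T} C_2\|_\infty \le \gamma\,\|C_1 - C_2\|_\infty$, which is precisely the claimed $\gamma$-contraction.

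Before invoking the contraction estimate I would discharge the typing claim $\mathcal{T}:\mathcal{C}\mapsto\mathcal{C}$: since returns are confined to $[-R_{\max}, R_{\max}]$ and CVaR is a monotone, translation-equivariant average of bounded returns, each $C^{\operatorname{tot}}$ is bounded, so $(\mathcal{C}, \|\cdot\|_\infty)$ is a complete metric space and $\mathcal{T}$ preserves boundedness. I do not anticipate a genuine obstacle, as this is the classical Bellman contraction adapted to the scalar aggregated signal; the one point deserving explicit care is that $\mathcal{T}$ acts on $C^{\operatorname{tot}}$ as an ordinary value rather than on the underlying return distributions, so the result concerns the scalar CVaR surrogate and not a distributional metric such as a Wasserstein distance. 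Making explicit that the max-inequality and the expectation-of-a-constant steps are exactly where the factor $\gamma$ (with $\gamma \in [0,1)$) enters is what turns this sketch into a rigorous argument.
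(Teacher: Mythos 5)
Your proof is correct and follows essentially the same route as the paper's: the reward term cancels, the non-expansiveness of the $\max$ operator is applied inside the expectation over the transition kernel, and the bound collapses to $\gamma\,\lVert C_{(1)} - C_{(2)} \rVert_\infty$. The only (welcome) addition is your explicit check that $\mathcal{T}$ preserves boundedness so that $(\mathcal{C}, \lVert\cdot\rVert_\infty)$ is a complete space, a point the paper leaves implicit.
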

Therefore, we can leverage the TD learning~\citep{sutton2018reinforcement} to train RMIX. Following the CTDE paradigm, we define our TD loss:
\begin{equation}
\label{eq:td_loss}
\mathcal{L}_{\Pi}(\theta) := \mathbb{E}_{\mathcal{D}^{\prime} \sim \mathcal{D} }\left[({y}_{t}^{\operatorname{tot}} -  C^{\operatorname{tot}}\left(\boldsymbol{s}_{t}, \boldsymbol{u}_{t}\right))^{2}\right]
\end{equation}
where ${y}_{t}^{\operatorname{tot}}=\left(r_{t}+\gamma \max_{\boldsymbol{u}^{\prime}} C_{\bar{\theta}}^{\operatorname{tot}}\left(\boldsymbol{s}_{t+1}, \boldsymbol{u}^{\prime}\right)\right)$, and $({y}_{t}^{\operatorname{tot}} -  C_{\theta}^{\operatorname{tot}}\left(\boldsymbol{s}_{t}, \boldsymbol{u}_{t}\right))$ is our CVaR TD error for updating CVaR values. $\theta$ is the parameters of $C^{\operatorname{tot}}$ which can be modeled by a deep neural network and $\bar{\theta}$ indicates the parameters of the target network which is periodically copied from $\theta$ for stabilizing training~\citep{mnih2015human}. While training, gradients from $Z_i$ are blocked to avoid changing the weights of the agents' network from the dynamic risk level predictor.

\textbf{Local Return Distribution Learning.} The CVaR estimation relies on accurately updating the local return distribution and the update is non-trivial. However, unlike many deep learning~\citep{goodfellow2014generative} and distributional RL methods~\citep{bellemare2017distributional,dabney2018distributional} where the label and local reward signal are accessible, in our problem, the exact rewards for each agent are unknown, which is very common in real world problems. To address this issue, we first consider CVaR values as dummy rewards of each agent due to its  property of modeling the potential loss of return and then leverage the Quantile Regression (QR) loss used in Distributional RL~\citep{dabney2018distributional} to explicitly update the local distribution decentrally. More concretely, QR aims to estimate the quantiles of the return distribution by minimizing the quantile regression loss between $Z_{i} (\tau_i, u_i)$ and its target distribution $\hat{Z}_{i} (\tau_i, u_i) = C_i(\tau_i, u_i, \alpha_{i}) + \gamma Z_i ({\tau_i}^{\prime}, {u_i}^{\prime}) $. Formally, the quantile distribution is represented by a set of quantiles $\tau_{j}=\frac{j}{K}$ and the quantile regression loss for Q network is defined as
\begin{equation}\label{eq:qr_loss}
    \mathcal{L}_{QR} = \frac{1}{N} \sum\nolimits_{i=1}^{N}  \sum\nolimits_{j=1}^{K} \mathbb{E}_{\hat{Z}_{i} \sim Z_{i}} [ \rho_{\tau_{j}} (\hat{Z}_{i} - Z_{i}) ]
\end{equation}
where $\rho_{\tau} (\nu) = \nu (\tau - \bm{1}\{\nu < 0\})$. To eliminate cuspid in $\rho_{\tau}$ which could limit performance when using non-linear function approximation, quantile Huber loss is used as the loss function. The quantile Huber loss is defined as $\rho_{\tau} (\nu) = \mathcal{L}_{\kappa}(\nu) |\tau - \bm{1}\{\nu < 0\}|$
where $\mathcal{L}_{\kappa}(\nu)$ is defined as:
\begin{align}
    \mathcal{L}_{\kappa}(\nu)= 
    \begin{cases}
          \frac{1}{2}\nu^2, & \text{if} |\nu| \geq \kappa,\\
          \kappa(|\nu| - \frac{1}{2} \kappa), & \text{otherwise.}
    \end{cases}
\end{align}
Note that Risk-sensitive RL and Distributional RL are two orthogonal research directions as discussed in Sec. \ref{Intro} and \ref{background}.

\textbf{Training.} Finally, we train RMIX in an end-to-end manner where each agent shares a single agent network and a risk predictor network to solve the lazy-agent issue~\citep{sunehag2017value}. $\psi_i$ is trained together with the agent network via the loss defined in Eqn. \ref{eq:td_loss}. During training, $f_{\operatorname{emb}_{i}}$ updates its weights while the gradients of $f_{\operatorname{emb}_{i}}$ are blocked in order to prevent changing the weights of the return distribution in agent $i$. In fact, agents only use CVaR values for execution and the risk level predictor only predicts the $\alpha$; thus the increased network capacity is mainly from the local return distribution and the CVaR operator. Our framework is flexible and can be easily used in many cooperative MARL methods. We present our framework in Figure \ref{fig:rmix_arch} and the pseudo code is shown in Appendix. All proofs are provided in Appendix.

\begin{figure*}[ht]
\begin{center}
    \includegraphics[width=1\textwidth]{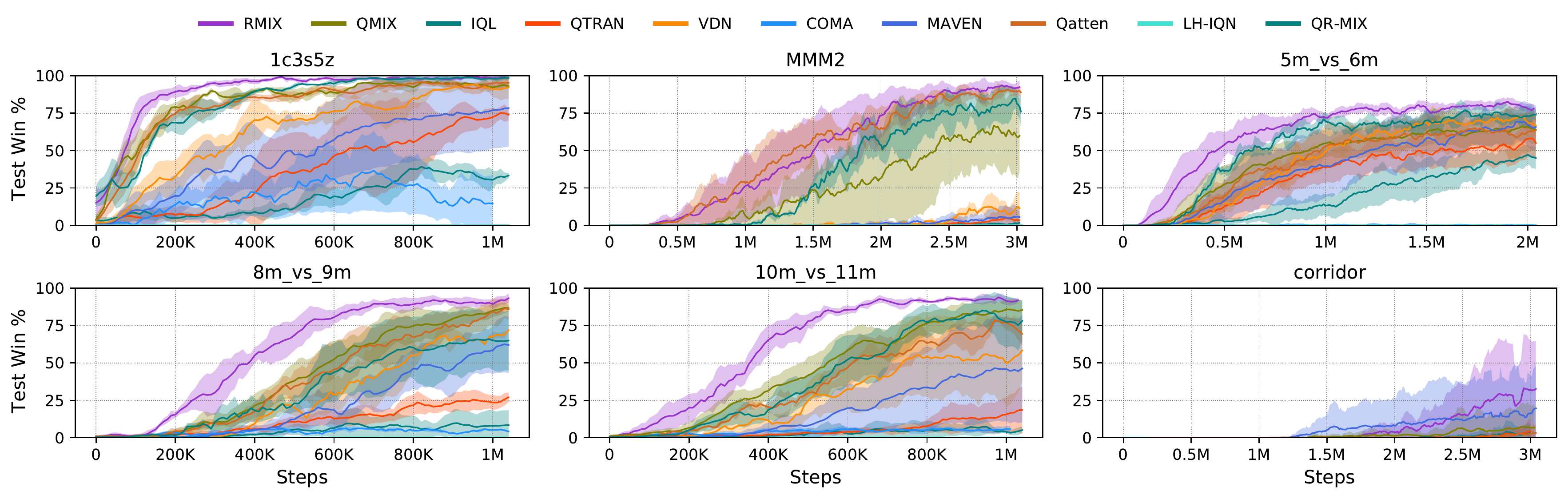}
    \vspace{-0.8cm}
    \caption{Test Winning rates for six scenarios. More results can be found in Appendix.}\label{res_1}
 \vspace{-0.3cm}
 \end{center}
\end{figure*}
\section{Theoretical Analysis}
Insightfully, our proposed method can be categorized into an overestimation reduction perspective which has been investigated in single-agent domain~\citep{thrun1993issues,hasselt2010double,lan2019maxmin,chen2021randomized}. Intuitively, during minimizing $\mathcal{L}_{\Pi}(\theta)$ and policy execution, we can consider CVaR implementation as calculating the mean over k-minimum $\delta$ values of $Z_i$. It motivates us to analyse our method's overestimation reduction property.

In single-agent cases, the overestimation bias occurs since the target $\max_{a^{\prime}} Q\left(s_{t+1}, a^{\prime}\right)$ is used in the Q-learning update. This maximum operator over target estimations is likely to be skewed towards an overestimate as Q is an approximation which is possibly higher than the true value for one or more of the actions. In multi-agent scenarios, for example StarCraft II, the primary goal for each agent is to survive (maintain positive health values) and win the game. Overestimation on high return values might lead to agents suffering defeat early-on in the game.

Formally, in MARL, we characterize the relation between the estimation error, the in-target minimization parameter $\alpha$ and the number of Dirac functions, $M$, which consist of the return distribution. We follow the theoretical framework introduced in~\citep{thrun1993issues} and extended in~\citep{chen2021randomized}. More concretely, let $C^{\operatorname{tot}}(\boldsymbol{s}, \boldsymbol{u})-Q (\boldsymbol{s}, \boldsymbol{u})$ be the pre-update estimation bias for the output $C^{\operatorname{tot}}$ with the chosen individual CVaR values, where $Q(\boldsymbol{s}, \boldsymbol{u})$ is the ground-truth Q-value. We are interested in how the bias changes after an update, and how this change is affected by risk level $\alpha$. The post-update estimation bias, which is the difference between two different targets, can be defined as:
\begin{equation}
\begin{aligned}
\Psi_{\boldsymbol{\alpha}} &\overset{\Delta}{=} r + \gamma \max_{\boldsymbol{u}^{\prime}} C^{\operatorname{tot}}\left(\boldsymbol{s}^{\prime}, \boldsymbol{u}^{\prime}\right) - \left(r + \gamma \max_{\boldsymbol{u}^{\prime}} Q^{\operatorname{tot}}\left(\boldsymbol{s}^{\prime}, \boldsymbol{u}^{\prime}\right)\right) \\ &= \gamma \left(\max_{\boldsymbol{u}^{\prime}} C^{\operatorname{tot}}\left(\boldsymbol{s}^{\prime}, \boldsymbol{u}^{\prime}\right)-\max _{\boldsymbol{u}^{\prime}} Q^{\operatorname{tot}}\left(\boldsymbol{s}^{\prime}, \boldsymbol{u}^{\prime}\right)\right) \nonumber
\end{aligned}
\end{equation}
where $\boldsymbol{\alpha}=\{\alpha_i\}_{i=1}^{N}$ and
$Q\left(\boldsymbol{s}^{\prime}, \boldsymbol{u}^{\prime}\right)$ is output of the centralized mixing network with individuals' $Q_i$ as input. Note that due to the zero-mean assumption, the expected pre-update estimation bias is $\mathbb{E}[
C^{\operatorname{tot}}(\boldsymbol{s}, \boldsymbol{u}) - Q(\boldsymbol{s}, \boldsymbol{u})] = 0$ by following \citet{thrun1993issues} and \citet{lan2019maxmin}. Thus if $\mathbb{E}[\Psi_{\alpha}] > 0$, the expected post-update bias is positive and there is a tendency for over-estimation accumulation; and if $\mathbb{E}[\Psi_{\alpha}]<0$, there is a tendency for under-estimation accumulation.
\begin{restatable}{theorem}{PropOverEst}\label{propOverEst}
We summarize the following properties:\\
(1) Given $\boldsymbol{\alpha}_1$ and $\boldsymbol{\alpha}_2$ where $\alpha$ values in each set are identical,  $\mathbb{E}[\Psi_{\boldsymbol{\alpha}_1}] \leq \mathbb{E}[\Psi_{\boldsymbol{\alpha}_2}]$ for $0 < \alpha_1 \leq \alpha_2 \leq 1$, $\alpha_1 \in \boldsymbol{\alpha}_1$ and $\alpha_2 \in \boldsymbol{\alpha}_2$. \\
(2) $\exists \alpha_i \in (0, 1]$ and $\boldsymbol{\alpha}=\{\alpha_i\}_{i=1}^{N}$, $\mathbb{E}[\Psi_{\boldsymbol{\alpha}}] < 0$.
\end{restatable}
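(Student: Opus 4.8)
The plan is to treat both claims as consequences of two structural facts: that $\operatorname{CVaR}_{\alpha}$ is monotonically non-decreasing in the risk level $\alpha$, and that the mixing network $f_m$ is monotonically non-decreasing in each of its inputs $C_i$. Concretely, writing $C_i(\alpha_i) = \operatorname{CVaR}_{\alpha_i}(Z_i)$ as the mean of the lower $\alpha_i$-tail (equivalently, the partial mean over the smallest $\lceil \alpha_i M\rceil$ Dirac atoms), I would first record the elementary order-statistics fact that the partial mean $\frac{1}{k}\sum_{j=1}^{k}\delta_{(j)}$ of the $k$ smallest atoms is non-decreasing in $k$; this gives $C_i(\alpha_1) \le C_i(\alpha_2)$ whenever $\alpha_1 \le \alpha_2$. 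Composing with the constraint $\partial C^{\operatorname{tot}}/\partial C_i \ge 0$ then yields $C^{\operatorname{tot}}(\boldsymbol s', \boldsymbol u'; \boldsymbol\alpha_1) \le C^{\operatorname{tot}}(\boldsymbol s', \boldsymbol u'; \boldsymbol\alpha_2)$ pointwise, for every joint action and every realization of the estimation noise.

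For claim (1), I would push this pointwise inequality through the $\max_{\boldsymbol u'}$ operator (which preserves $\le$, since the maximizer of the smaller function is dominated by the larger function's maximum) and then through the expectation. Because the term $\max_{\boldsymbol u'} Q^{\operatorname{tot}}(\boldsymbol s', \boldsymbol u')$ appearing in $\Psi_{\boldsymbol\alpha}$ does not depend on $\boldsymbol\alpha$, this immediately gives $\mathbb{E}[\Psi_{\boldsymbol\alpha_1}] \le \mathbb{E}[\Psi_{\boldsymbol\alpha_2}]$, which is exactly (1).

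For claim (2), I would first pin down the two extremes of this monotone family. At $\boldsymbol\alpha = \boldsymbol 1$ (risk-neutral), $C_i$ reduces to the full mean $\mathbb{E}[Z_i] = Q_i$, so the zero-mean assumption gives $\mathbb{E}[C^{\operatorname{tot}}(\boldsymbol s', \boldsymbol u'; \boldsymbol 1)] = Q^{\operatorname{tot}}(\boldsymbol s', \boldsymbol u')$ for each action, and Jensen's inequality applied to the convex $\max$ yields $\mathbb{E}[\max_{\boldsymbol u'} C^{\operatorname{tot}}(\boldsymbol 1)] \ge \max_{\boldsymbol u'} Q^{\operatorname{tot}}$, i.e.\ $\mathbb{E}[\Psi_{\boldsymbol 1}] \ge 0$ --- the familiar overestimation at the top of the range. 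I would then drive $\alpha_i \to 0^+$, where $C_i$ collapses onto the smallest atom $\delta_{(1),i}$ of $Z_i$. The downward CVaR shift $Q_i - \mathbb{E}[C_i(\alpha_i)]$ can be made large relative to $\max_{\boldsymbol u'} Q^{\operatorname{tot}}$ by taking return distributions whose lower tail reaches toward $-R_{\max}$, whereas the upward bias contributed by the $\max$ operator is bounded (the errors, and hence $\mathbb{E}[\max_{\boldsymbol u'} e_{\boldsymbol u'}]$, are controlled since returns lie in $[-R_{\max}, R_{\max}]$). Choosing such a distribution and $\alpha$ small enough that the downward shift dominates gives $\mathbb{E}[\max_{\boldsymbol u'} C^{\operatorname{tot}}(\boldsymbol\alpha)] < \max_{\boldsymbol u'} Q^{\operatorname{tot}}$, hence $\mathbb{E}[\Psi_{\boldsymbol\alpha}] < 0$, establishing existence. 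Alternatively, continuity of $\mathbb{E}[\Psi_{\boldsymbol\alpha}]$ in the common $\alpha$, together with $\mathbb{E}[\Psi_{\boldsymbol 1}]\ge 0$ and a strictly negative limit as $\alpha\to 0^+$, gives the same conclusion via the intermediate value theorem.

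The main obstacle I anticipate is in claim (2): cleanly separating the two competing effects --- the positive bias injected by the $\max$ over noisy estimates versus the negative bias injected by CVaR tail-averaging --- and certifying that some choice of $\alpha$ and return distribution makes the latter strictly dominate. This is where the boundedness of the support $[-R_{\max}, R_{\max}]$ and an explicit order-statistics bound on the expected maximum of the estimation errors do the real work; claim (1) is essentially bookkeeping once the two monotonicity facts are in hand.
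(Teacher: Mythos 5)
Your part (1) is essentially the paper's own argument: the two facts you isolate --- CVaR nondecreasing in the risk level (your partial-mean/order-statistics observation, which the paper instead draws from Proposition~\ref{PropExpected} and Eqn.~\ref{eq:cvar_risk}) and monotonicity of the mixer in each $C_i$ --- are exactly what the paper composes. The only mechanical difference is that the paper first treats the additive (VDN) mixer, writing $\max_{\boldsymbol{u}'} C^{\operatorname{tot}}$ as $\sum_i \max_{u_i} C_i$, and then redoes the QMIX-style commutation $\max_{\boldsymbol{u}} C^{\operatorname{tot}}(\boldsymbol{s},\boldsymbol{u}) = f_{\operatorname{m}}(\{\max_{u_i} C_i\}_{i=1}^N)$ for the monotonic mixer, whereas you push a single pointwise inequality through $\max$ and the expectation; your version is slightly cleaner and equally valid.

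For part (2) you take a genuinely different, and substantially harder, route, and it has a gap. The paper never fights the Jensen/overestimation term: in its proof, the comparison target is the mixer output on the \emph{means of the same learned distributions}, so by Proposition~\ref{PropExpected} ($C_i = \lambda_{(\tau_i,u_i)}\mathbb{E}[Z_i]$ with $\lambda \in (0,1]$ and $\operatorname{CVaR}$ strictly below the mean) the CVaR-based target is dominated by the risk-neutral target pointwise, for \emph{every} realization of the estimation noise, and negativity of the expectation for suitable $\boldsymbol{\alpha}$ follows with no noise analysis at all. You instead read $Q^{\operatorname{tot}}$ as the ground truth, which forces you to show that the CVaR tail shift beats the positive bias $\mathbb{E}[\max_{\boldsymbol{u}'} e_{\boldsymbol{u}'}]$ injected by maximizing over noisy estimates. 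That dominance step is exactly what you flag as ``the real work,'' and your plan does not actually secure it: you need both to \emph{choose} the return distributions (so your existence claim quantifies over distributions as well as over $\boldsymbol{\alpha}$, which is more than the theorem asserts but also changes what is being proved) and to produce a quantitative bound on the expected maximum of the errors, which the paper's assumptions (zero mean, bounded rewards) do not supply --- the noise magnitude is unconstrained, so for a fixed learned $Z_i$ the shift $\mathbb{E}[Z_i]-\delta_{(1),i}$ need not exceed it, and under your reading the claim actually fails for near-degenerate $Z_i$ with nonvanishing noise. Either adopt the paper's comparison (CVaR target versus mean target computed from the same $Z_i$), in which case (2) is immediate from Proposition~\ref{PropExpected}, or state explicitly that your existence is joint over distributions and risk levels and carry out the error-maximum bound you only sketch.
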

Theorem \ref{propOverEst} implies that we can control the $\mathbb{E}[\Psi_{\boldsymbol{\alpha}}]$, bringing it from above zero (overestimation) to under zero (underestimation) by decreasing $\alpha$. Thus, we can control the post-update bias with the risk level $\alpha$ and boost the RMIX training on scenarios where overestimation can lead to failure of cooperation. In the next section, we will present the empirical results.

\section{Experiments}\label{experiments}
We empirically evaluate our method on various StarCraft II (SCII) scenarios where agents coordinate and defeat the opponent. Especially, we are interested in the robust cooperation between agents and agents' learned risk-sensitive policies in complex \emph{asymmetric} and \emph{homogeneous}/\emph{heterogeneous} scenarios.
Additional introduction of scenarios and results are in Appendix.
\begin{figure}[ht]
\begin{center}
  \subfigure[5m\_vs\_6m]{\includegraphics[height=2.5cm,width=.23\textwidth]{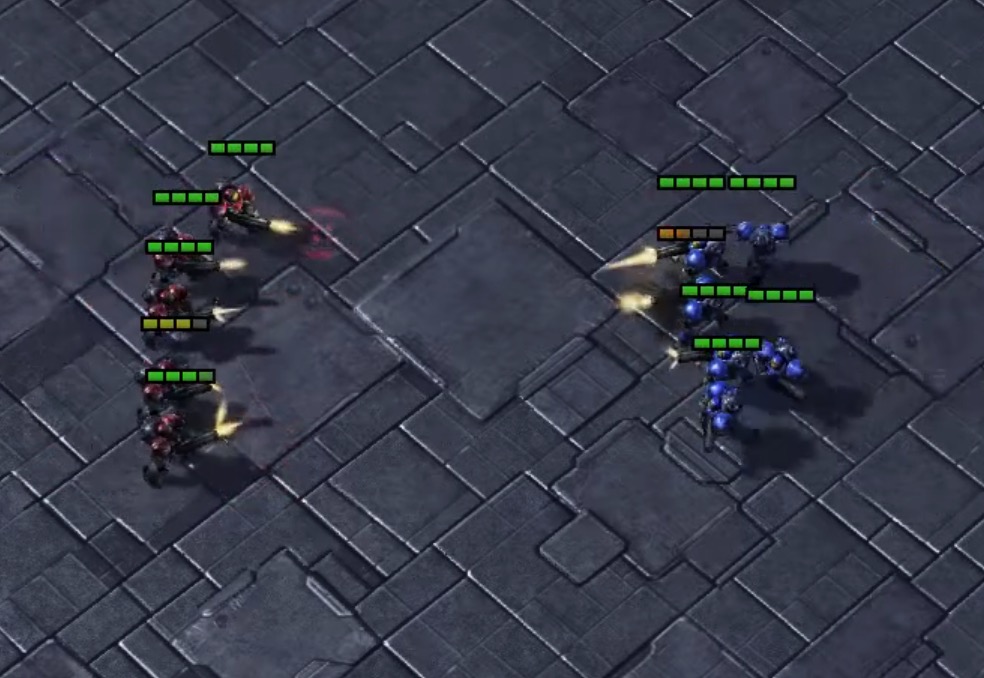}\label{img_5m_vs_6m}}
  \subfigure[MMM2]{\includegraphics[height=2.5cm,width=.23\textwidth]{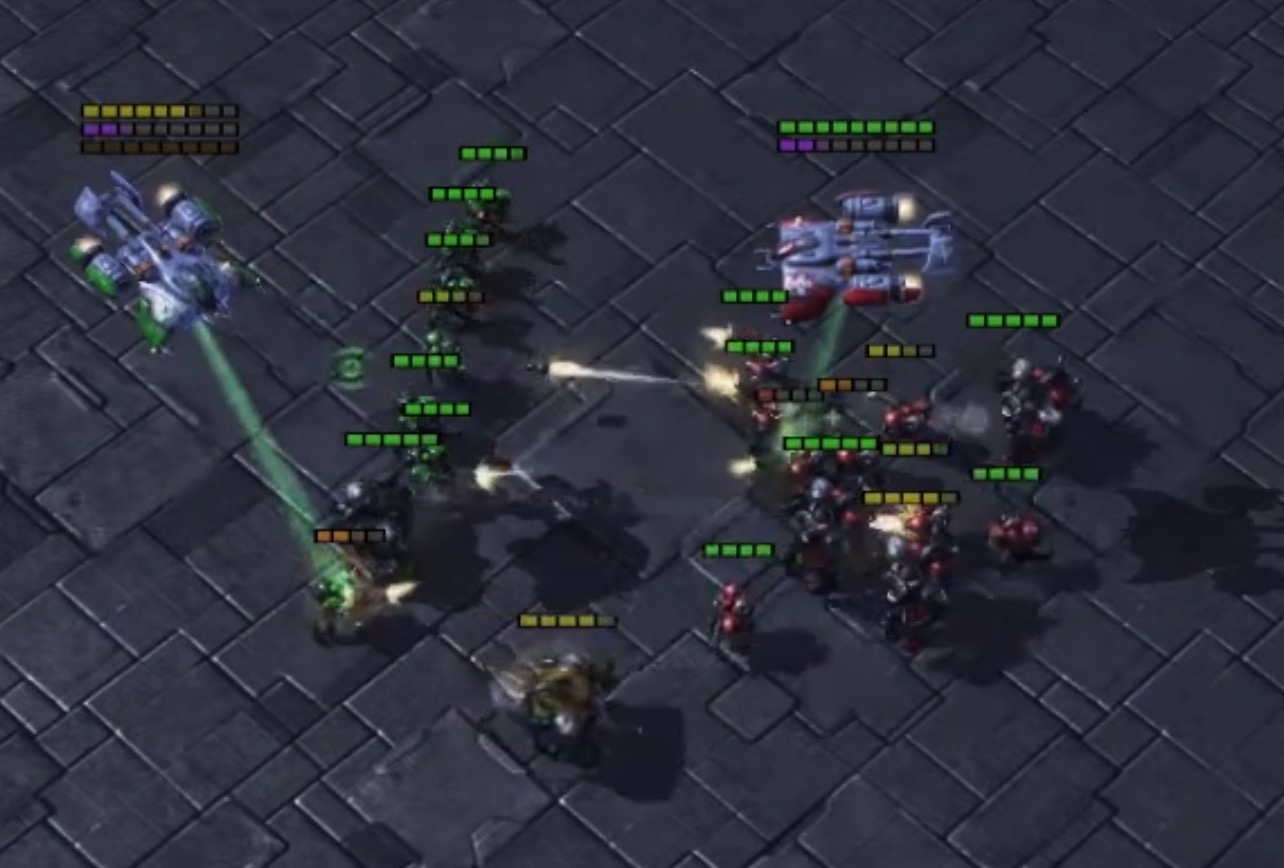}\label{img_MMM2}}
 \caption{SMAC scenarios: 5m\_vs\_6m and MMM2.}\label{sc_images}
\vspace{-0.3cm}
\end{center}
\end{figure}

\subsection{Experiment Setup}
\textbf{StarCraft II.} We consider SMAC~\citep{samvelyan19smac} benchmark, a challenging set of cooperative SCII maps for micromanagement, as our evaluation environments. The version SCII simulator is 4.10. We evaluate our method for every 10,000 training steps during training by running 32 episodes in which agents trained with our method battle with built-in game bots. 
We report the mean test won rate (percentage of episodes won of MARL agents) along with one standard deviation of test won rate (shaded in figures). We present the results of our method and baselines on 6 scenarios: 1c3s5z, MMM2, 5m\_vs\_6m, 8m\_vs\_9m, 10m\_vs\_11m and corridor.

\begin{table}[ht]
\vspace{-0.5cm}
\caption{Baselines}\label{table:baselines}
\centering
\begin{tabular}{{l}{l}}
 \hline
 \small{Baselines} & \small{Description} \\
 \hline
    \small{IQL} & \citet{tampuu2017multiagent}   \\
    \small{VDN} & \citet{sunehag2017value} \\
    \small{COMA} & \citet{foerster2017counterfactual} \\
    \small{QMIX} & \citet{rashid2018qmix} \\
    \small{QTRAN} & \citet{son2019qtran} \\
    \small{MAVEN} & \citet{mahajan2019maven} \\
    \small{LH-IQN} & \citet{lyu2020likelihood} \\ 
    \small{QR-MIX} & \citet{hu2020qr} \\
    \small{Qatten} & \citet{yang2020qatten}\\
 \hline
 \vspace{-0.5cm}
\end{tabular}
\end{table}
\textbf{Baselines and training.} We show baselines in Table \ref{table:baselines}. We implement our method on PyMARL~\citep{samvelyan19smac} and use 5 random seeds to train each method. We carry out experiments on NVIDIA Tesla V100 GPU 16G.

\subsection{Experimental Results}

\subsubsection{Comparison with baseline methods} 
\textbf{Comparison with value-based MARL baselines.} As depicted in Figure~\ref{res_1}, RMIX demonstrates substantial superiority over baselines in \emph{asymmetric} and \emph{homogeneous} scenarios. RMIX outperforms baselines in \emph{asymmetric homogeneous} scenarios: 5m\_vs\_6m  (\textbf{super hard} and \emph{asymmetric}), 8m\_vs\_9m  (\textbf{easy} and \emph{asymmetric}), 10m\_vs\_11m  (\textbf{easy} and \emph{asymmetric}) and corridor (\textbf{super hard} and \emph{asymmetric}). On 1c3s5z (\textbf{easy} and \emph{symmetric heterogeneous}) and MMM2 (\textbf{super hard}  and \emph{symmetric heterogeneous}), RMIX also shows leading performance over baselines.
RMIX improves coordination in a sample efficient way via risk-sensitive policies. Intuitively, for \emph{asymmetric} scenarios, agents can be easily defeated by the opponents. As a consequence, coordination between agents is cautious in order to win the game, and the cooperative strategies in these scenarios should avoid massive casualties in the starting stage of the game. Apparently, our risk-sensitive policy representation works better than vanilla expected Q values (QMIX, VDN and IQL, et al.) in evaluation. In \emph{heterogeneous} scenarios, action space and observation space are different among different types of agents, and methods with vanilla expected action value are inferior to RMIX. 

Interestingly, as illustrated in Figure \ref{fig:corridor_res}, RMIX also demonstrates leading exploration performance on \textbf{super hard} corridor scenario, where there is a narrow corridor connecting two separate rooms, and agents should learn to cooperatively combat the opponents to avoid being beaten by opponents with the divide-and-conquer strategy. RMIX outperforms MAVEN~\citep{mahajan2019maven}, which was originally proposed for tackling multi-agent exploration problems, both in sample efficiency and performance. After 5 million training steps, RMIX starts to converge while MAVEN starts to converge after over 7 million training steps.
\begin{figure}[ht]
\begin{center}
 \vspace{-0.3cm}
    \includegraphics[width=0.45\textwidth]{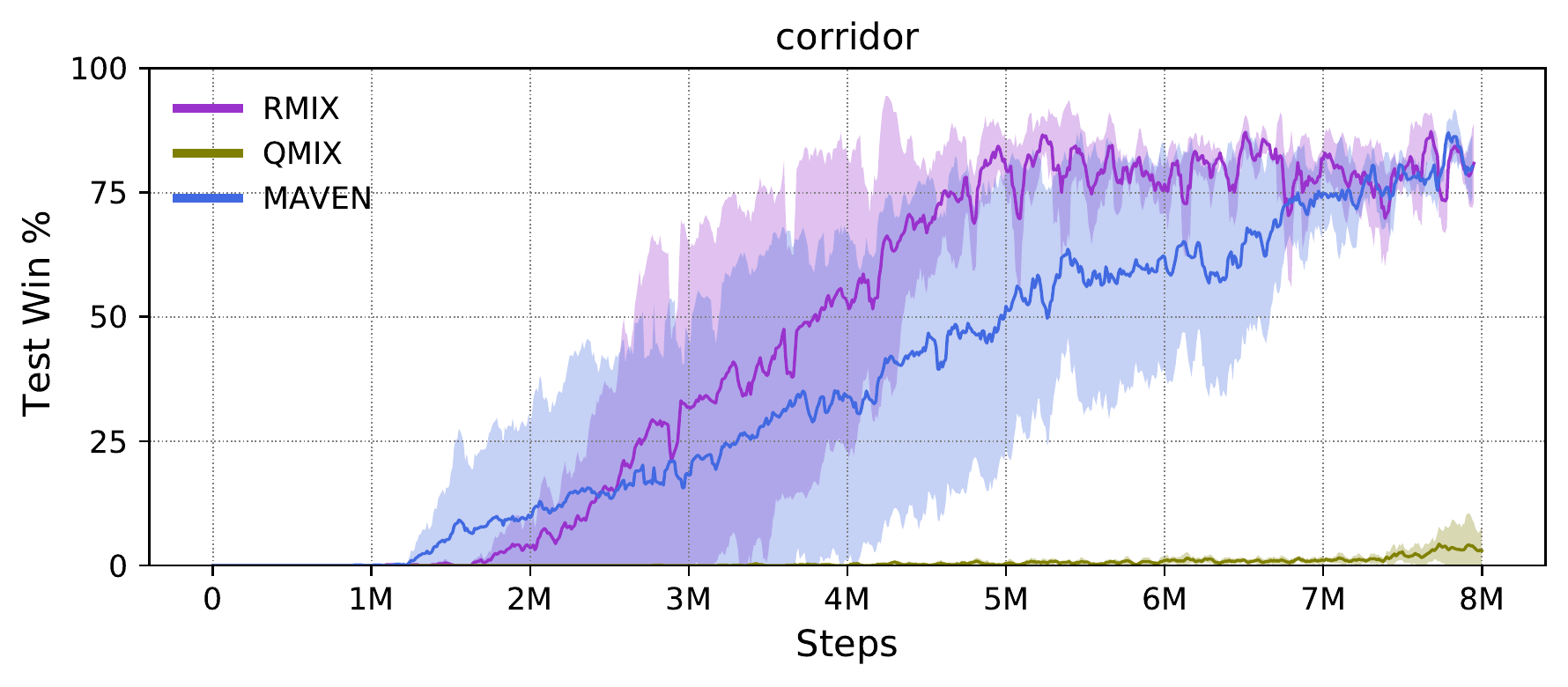}
    \vspace{-0.6cm}
    \caption{Test Winning rates for \textbf{super hard} scenario: corridor.}\label{fig:corridor_res}
 \vspace{-0.5cm}
 \end{center}
\end{figure}

\textbf{Comparison with Centralized Distributional MARL baseline.} We compare RMIX with QR-MIX, which learns a centralized return distribution while the policies of each agent are conventional Q values. As shown in Figure \ref{res_1}, RMIX shows leading performance and superior sample efficiency over QR-MIX on 6 scenarios. With a centralized return distribution, QR-MIX presents slightly better performance over other baselines on 1c3s5z, MMM2 and 5m\_vs\_6m. The results show that learning individual return distributions can better capture the randomness and thus improve the overall performance compared with learning central distribution which is only used during training while the individual returns are expected Q values.

\textbf{Comparison with risk-sensitive MARL baseline.} Although there are few practical methods on risk-sensitive multi-agent reinforcement learning, we also conduct experiments to compare our method with LH-IQN~\citep{lyu2020likelihood}, which is a risk-sensitive method built on implicit Quantile network. We can find that our method outperforms LH-IQN in many scenarios. LH-IQN performs poorly on StarCraft II scenarios as it is an independent learning MARL methods like IQL.

\begin{figure}[ht]
\vspace{-0.3cm}
    \begin{center}
        \includegraphics[width=0.5\textwidth]{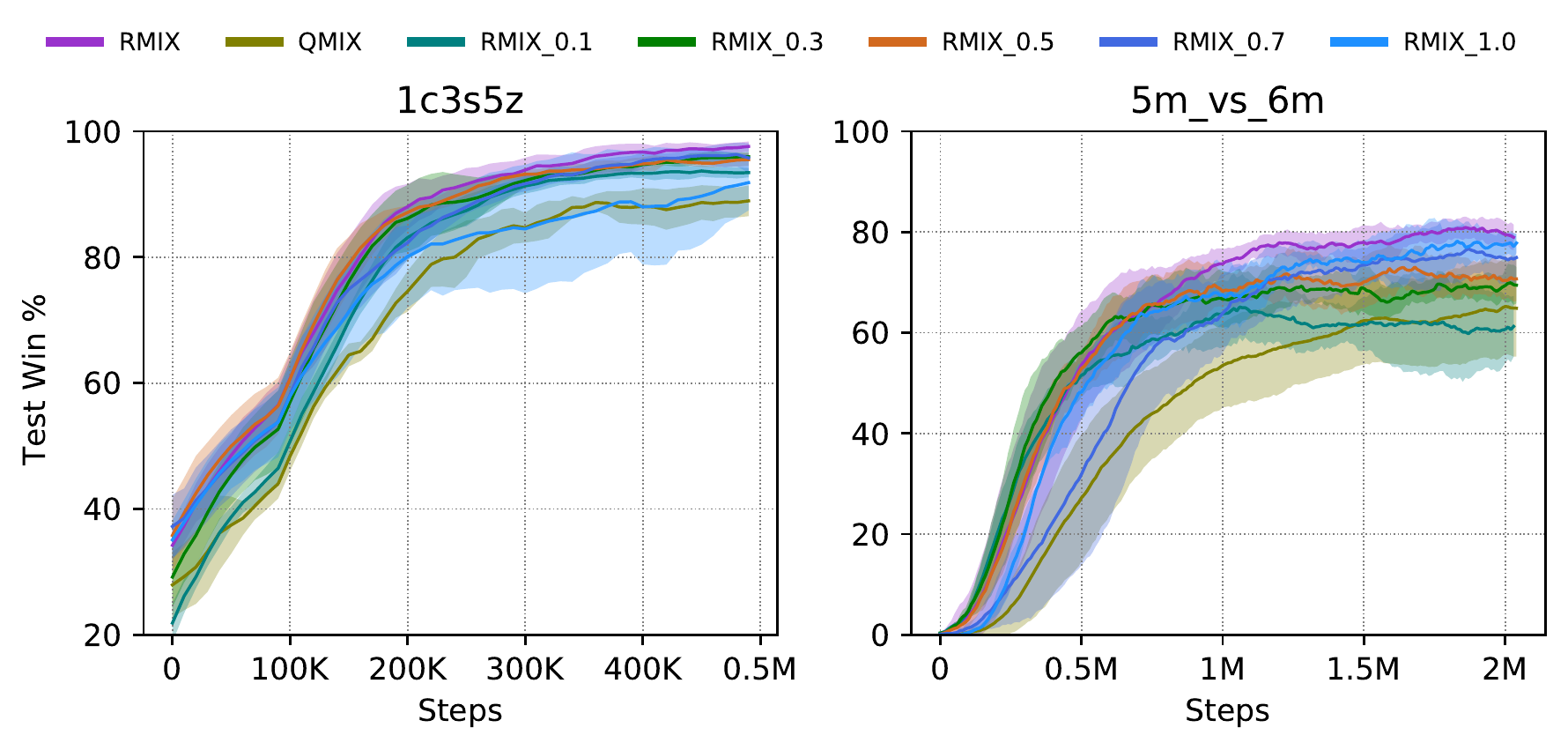}
        \vspace{-0.4cm}
        \caption{Ablation studies on various static risk levels.}\label{fig:ablation_1}
    \vspace{-0.3cm}
    \end{center}
\end{figure}
\begin{figure*}[ht]

\begin{center}
    \includegraphics[width=0.9\textwidth]{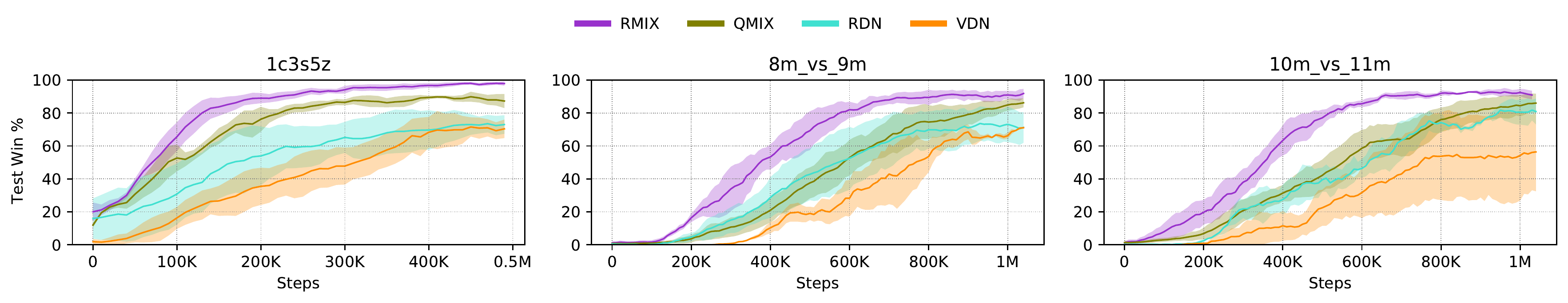}
    \vspace{-0.5cm}
    \caption{Test Winning rates of RMIX, RDN, VDN and QMIX.}\label{fig:ablation_study_vdn}
 \vspace{-0.3cm}
 \end{center}
\end{figure*}

\begin{figure*}[ht]
\begin{center}
    \includegraphics[width=0.75\textwidth]{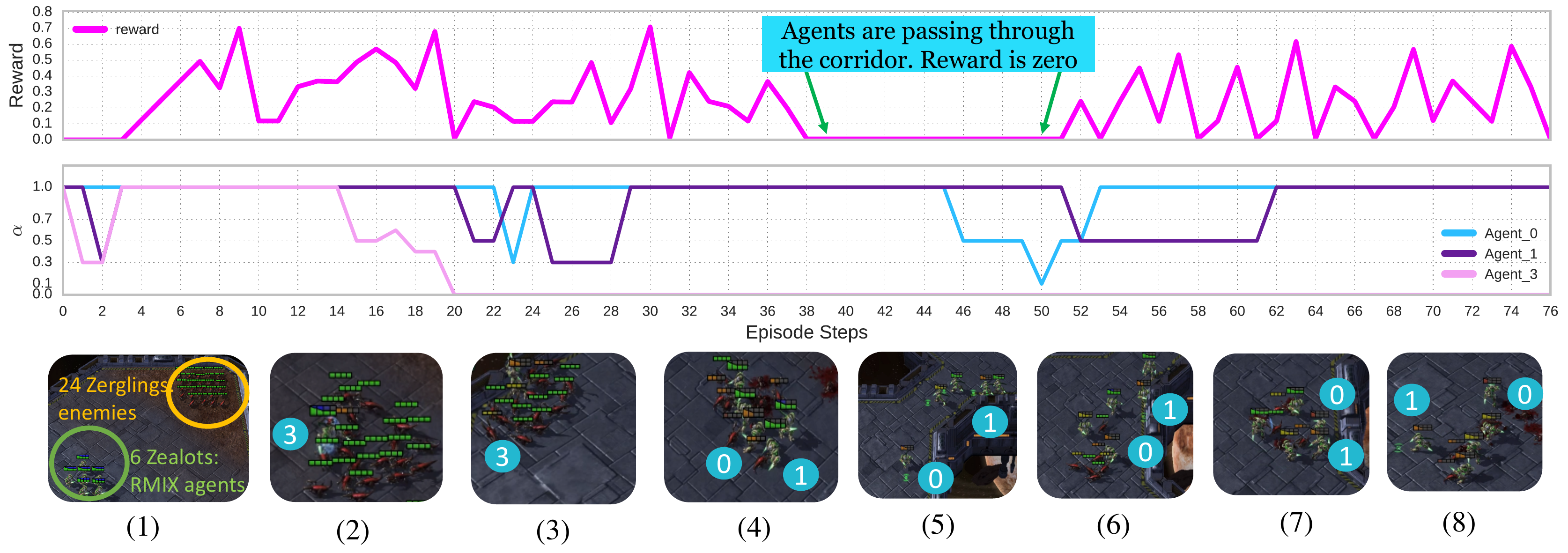}
    \vspace{-0.5cm}
        \begin{center}
            \caption{Results analysis of RMIX on corridor. There are 8 scenes as examples. Further discussion can be found in Sec. \ref{ResultsAnalysis}.}\label{fig:rmix_analysis}
        \end{center}
    \vspace{-0.8cm}
\end{center}
\end{figure*}

\subsubsection{Ablations}
RMIX mainly consists of two components: the CVaR policies and the risk level predictor. The CVaR policies are different vanilla Q values and the risk level predictor is proposed to model the temporal structure and as an $\alpha$-finding strategy for hyperparameter tuning. Our ablation studies serve to answer the following questions: \textbf{(a)} Can RMIX with static $\alpha$ also work? \textbf{(b)} Can the risk level predictor learn $\alpha$ values and fast learn a good policy compared with RMIX with static $\alpha$? \textbf{(c)} Can our framework be applied to other methods? 

To answer the above questions, we first conduct an ablation study by fixing the risk level in RMIX with the value of $\{0.1, 0.3, 0.5, 0.7, 1.0\}$ and compare with RMIX and QMIX on 1c3s5z (\textbf{easy} and \textit{symmetric heterogeneous}) and 5m\_vs\_6m (\textbf{super hard} and \textit{asymmetric}). As illustrated in Figure \ref{fig:ablation_1}, with static $\alpha$ values, RMIX is capable of learning good performance over QMIX, which demonstrates the benefits of learning risk-sensitive MARL policies in complex scenarios where the potential of loss should be taken into consideration in coordination. With risk level predictor, RMIX converges faster than RMIX with static $\alpha$ values, illustrating that agents have captured the temporal features of scenarios and possessed the $\alpha$ value tuning merit. 

To show that our proposed method is flexible in other mixing networks, we then apply additivity of individual CVaR values to represent the global CVaR value as $C^{\operatorname{tot}}(\boldsymbol{\tau}, \boldsymbol{u}) = C_{1} (\tau_{1}, u_{1}, \alpha_{1}) + \cdots + C_{N} (\tau_{N}, u_{N}, \alpha_{N})$. Following the training of RMIX, we name this method Risk Decomposition Network (RDN). We use experiment setup of VDN and train RDN on 3 SMAC scenarios. With CVaR values of actions as policies, RDN outperforms VDN on 10m\_vs\_11m and converges faster than VDN on 1c3s5z and 8m\_vs\_9m, as depicted in Figure \ref{fig:ablation_study_vdn}, demonstrating that our framework can be applied to VDN and outperforms VDN.

\subsection{Results Analysis} \label{ResultsAnalysis}

We are interested in finding if the risk level predictor can predict temporal risk levels. We use the trained model of RMIX and run the model to collect one episode data including game replay, states, actions, rewards and $\alpha$ values. As shown in Figure \ref{fig:rmix_analysis}, the first row shows the rewards of one episode and the second row shows the $\alpha$ value each agent predicts per time step. There are eight scenes in the third row to show how agents learn time-consistency $\alpha$ values. Scenes in Figure \ref{fig:rmix_analysis} are screenshots from the game replay.

We use trajectories of agent 0, 1 and 3 as examples in Figure \ref{fig:rmix_analysis}. Number in the circle indicates the index of the agent.  Scene (1): one episode starts, 6 Zealots consist of RMIX agents and 24 Zerglings compose enemies; Scene (2): in order to win the game, agent 3 draws the attention of enemies and goes to the other side of the battlefield. The $\alpha$ value is $0.3$ at step 2. Many enemies are chasing agent 3. The rest agents are combating with fewer number enemies; Scene (3): at step 14, agent 3 is at the corner of the battlefield the $\alpha$ is decreasing. As being outnumbered, agent 3 quickly dies and the $\alpha$ is zero; Scene (4)  agent 0 and 1 show similar $\alpha$ values as they are walking around and fighting with enemies from time step 22 to 30. Then agents kill enemies around and planing to go to the other side of the battlefield in order to win the game; Scene (5): agent 0 (low health value) is walking through the corridor alone to draw enemies to come over. To avoid being killed, $\alpha$ values are low (step 46-50) which means the policy is risk-averse. From step 38 and step 51, reward is zero; Scene (6): agent 0 is facing many enemies and luckily its teammates are coming to help. So, the $\alpha$ is increasing (step 50);  Scene (7): as there are many teammates around and the number of enemies is small, agents are going to win. Agent 0 and agent 1 walk outside the range of the observations of enemies in order to survive. The $\alpha$ values of agent 0 and agent 1 are 1 (risk-neutral); Scene (8): agents win the game. The video is available on this link: \url{https://youtu.be/5yBEdYUhysw}. Interestingly, the result shows emergent cooperation strategies between agents at different step during the episode, which demonstrates the superiority of RMIX.

\section{Conclusion and Future Work}\label{conclusion}
In this paper, we propose RMIX, a novel and practical MARL method with CVaR over the learned distributions of individuals' Q values as risk-sensitive policies for cooperative agents. Empirically, we show that our method outperforms baseline methods on many challenging StarCraft II tasks, reaching the state-of-the-art performance and demonstrating significantly enhanced coordination as well as high sample efficiency.

Risk-sensitive policy learning is vital for many real-world multi-agent applications especially in risky tasks, for example autopilot vehicles, military action, resource allocation, finance portfolio management and Internet of Things (IoT). For the future work, better risk measurement together with accurate spatial-temporal trajectory representation can be investigated. Also, learning to model other agents' risk levels and reach consensus with communication can be another direction for enhancing multi-agent coordination.

\bibliography{main}
\bibliographystyle{icml2021}

\appendix
\newpage
\onecolumn



\section{Proofs}\label{proof}
We present proofs of our propositions and theorem introduced in the main text. The numbers of proposition, theorem and equations are reused in restated propositions.

\begin{restatable}{assumption}{Assumption2}\label{assumption2}
The mean rewards are bounded in a known interval, i.e., $r \in [-R_{\operatorname{max}}, R_{\operatorname{max}}]$. 
\end{restatable}

This assumption means we can bound the absolute value of
the Q-values as $\vert Q_{sa} \vert \leq Q_{\operatorname{max}} = HR_{\operatorname{max}}$, where $H$ is the maximum time horizon length in episodic tasks. 

\PropContract*
\begin{proof}
We consider the sup-norm contraction,
\begin{equation}
\begin{aligned}
    & \left\vert \mathcal{T} C_{(1)}(\boldsymbol{s}, \boldsymbol{u}) - \mathcal{T} C_{(2)}(\boldsymbol{s}, \boldsymbol{u}) \right\vert \leq \gamma \left\lVert C_{(1)}(\boldsymbol{s}, \boldsymbol{u}) - C_{(2)}(\boldsymbol{s}, \boldsymbol{u}) \right\lVert_{\infty} \quad \forall \boldsymbol{s} \in \mathcal{S}, \boldsymbol{u} \in \mathcal{U}.
\end{aligned}
\end{equation}
The sup-norm is defined as $\left\lVert C \right\lVert_{\infty}=\sup_{\boldsymbol{s} \in \mathcal{S}, \boldsymbol{u} \in \mathcal{U}} \left\vert C(\boldsymbol{s}, \boldsymbol{u}) \right\vert$ and $C \in \mathbb{R}$.

In $\{C_{i}\}^{N}_{i=1}$, the risk level is fixed and can be considered implicit input. Given two different return distributions $Z_{(1)}$ and $Z_{(2)}$, we prove:
\begin{equation}\label{eq:cvar_contract}
\begin{aligned}
    \left\vert \mathcal{T} C_{(1)} - \mathcal{T} C_{(2)} \right\vert
    &\leq \max_{\boldsymbol{s}, \boldsymbol{u}}\left\vert \left[\mathcal{T}C_{(1)}\right](\boldsymbol{s}, \boldsymbol{u}) - \left[\mathcal{T}C_{(2)} \right](\boldsymbol{s}, \boldsymbol{u}) \right\vert \\
    &= \max_{\boldsymbol{s}, \boldsymbol{u}} \left\vert \gamma \sum\nolimits_{\boldsymbol{s}^{\prime}}\mathcal{P}(\boldsymbol{s}^{\prime}\vert\boldsymbol{s},\boldsymbol{u}) \left ( \max_{\boldsymbol{u}^{\prime}}C_{(1)}(\boldsymbol{s}^{\prime}, \boldsymbol{u}^{\prime})  -  \max_{\boldsymbol{u}^{\prime}}C_{(2)}(\boldsymbol{s}^{\prime}, \boldsymbol{u}^{\prime}) \right ) \right\vert \\
    &\leq \gamma \max_{\boldsymbol{s}^{\prime}}  \left\vert \max_{\boldsymbol{u}^{\prime}} C_{(1)}(\boldsymbol{s}^{\prime}, \boldsymbol{u}^{\prime}) - \max_{\boldsymbol{u}^{\prime}} C_{(2)}(\boldsymbol{s}^{\prime}, \boldsymbol{u}^{\prime}) \right\vert \\
    &\leq \gamma \max_{\boldsymbol{s}^{\prime}, \boldsymbol{u}^{\prime}} \left\lvert C_{(1)}(\boldsymbol{s}^{\prime}, \boldsymbol{u}^{\prime}) - C_{(2)}(\boldsymbol{s}^{\prime}, \boldsymbol{u}^{\prime}) \right\lvert \\
    &= \gamma \left\lVert C_{(1)} - C_{(2)} \right\lVert_{\infty}
\end{aligned}
\end{equation}
This further implies that
\begin{equation}
    \left\vert \mathcal{T}C_{(1)} - \mathcal{T}C_{(2)}\right\vert \leq \gamma \left\lVert C_{(1)} - C_{(2)} \right\lVert_{\infty} \quad \forall \boldsymbol{s} \in \mathcal{S}, \boldsymbol{u} \in \mathcal{U}.
\end{equation}
\end{proof}

With proposition \ref{propContract}, we can leverage the TD learning~\citep{sutton2018reinforcement} to compute the maximal CVaR value of each agent, thus leading to the maximal global CVaR value.

\begin{restatable}{proposition}{PropExpected}\label{PropExpected}
For any agent $i \in \{1, \dots, N\}$, $\exists \lambda(\tau_{i}, u_{i}) \in \left(0, 1\right]$, such that $C_{i} (\tau_{i}, u^{t-1}_{i}) = \lambda(\tau_{i}, u^{t-1}_{i}) \mathbb{E}\left[Z_{i} (\tau_{i}, u^{t-1}_{i}) \right]$.
\end{restatable}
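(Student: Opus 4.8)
The plan is to exhibit $\lambda$ explicitly as a ratio and then verify that this ratio lands in $(0,1]$. Assuming $\mathbb{E}\left[Z_{i}(\tau_{i},u^{t-1}_{i})\right]\neq 0$, I would simply set
\[
\lambda(\tau_{i},u^{t-1}_{i}) := \frac{C_{i}(\tau_{i},u^{t-1}_{i})}{\mathbb{E}\left[Z_{i}(\tau_{i},u^{t-1}_{i})\right]},
\]
so that the identity $C_{i}=\lambda\,\mathbb{E}[Z_{i}]$ holds by construction. Here the risk level $\alpha_{i}$ is treated as a fixed implicit input, exactly as in the proof of Proposition~\ref{propContract}. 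The entire content of the statement then reduces to the two bounds $\lambda\le 1$ and $\lambda>0$, which I would prove separately.

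For the upper bound I would invoke the monotonicity of CVaR in its risk level: for a fixed distribution $F_{Z_{i}}$, the map $\alpha\mapsto\operatorname{CVaR}_{\alpha}(F_{Z_{i}})$ is non-decreasing, and at $\alpha=1$ it coincides with the full expectation, $\operatorname{CVaR}_{1}(F_{Z_{i}})=\mathbb{E}[Z_{i}]$. This is the same monotonicity that already drives Theorem~\ref{propOverEst}(1), and it is consistent with the remark that $\alpha_{i}=1$ gives risk-neutrality while $\alpha_{i}\to 0$ increases risk-aversion. Consequently $C_{i}=\operatorname{CVaR}_{\alpha_{i}}(F_{Z_{i}})\le\mathbb{E}[Z_{i}]$ for every $\alpha_{i}\in(0,1]$. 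Dividing through by $\mathbb{E}[Z_{i}]$ and using that $\mathbb{E}[Z_{i}]>0$ (so the inequality direction is preserved) yields $\lambda\le 1$.

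For the lower bound I would argue that $C_{i}$ is strictly positive, hence of the same sign as $\mathbb{E}[Z_{i}]$, giving $\lambda>0$. Reading off the closed form in Eqn.~(\ref{eq:cvar_risk}), $C_{i}=\sum_{j}\mathscr{P}_{j}\delta_{j}\,\bm{1}\{\delta_{j}\le\hat{v}_{m,\alpha_{i}}\}$ is a nonnegatively-weighted partial sum of the atoms $\delta_{j}$ over the lower tail, so its sign and positivity are controlled by the support of the return. The step I expect to be the main obstacle is precisely this sign/positivity argument: since CVaR concentrates on the worst atoms, ruling out $C_{i}\le 0$ cannot follow from the CVaR formalism alone and genuinely requires restricting to the regime of nonnegative accumulated returns with positive mean — the relevant regime for the cooperative StarCraft tasks (and obtainable from Assumption~\ref{assumption2} after a suitable shift). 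Under this restriction $\mathbb{E}[Z_{i}]>0$ and $C_{i}>0$ both hold, the degenerate cases $\mathbb{E}[Z_{i}]\le 0$ are excluded, and combining the two bounds gives $\lambda\in(0,1]$ as claimed. The upper bound, by contrast, is essentially free from the coherence and monotonicity of CVaR.
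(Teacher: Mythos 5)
Your proof follows essentially the same route as the paper's own: the paper, too, implicitly takes $\lambda$ to be the ratio $C_{i}/\mathbb{E}[Z_{i}]$, and it obtains the upper bound $C_{i} \le \mathbb{E}[Z_{i}]$ from the fact that a lower-tail conditional expectation lies below the mean, $\Pi_{\alpha}Z = \mathbb{E}\left[\mathscr{Z} \mid \mathscr{Z} < z\right] < \mathbb{E}\left[\mathscr{Z}\right]$; your derivation via monotonicity of $\alpha \mapsto \operatorname{CVaR}_{\alpha}(F_{Z_{i}})$ together with $\operatorname{CVaR}_{1}(F_{Z_{i}}) = \mathbb{E}[Z_{i}]$ is an interchangeable way of reaching the same inequality.

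The step you flag as the main obstacle --- strict positivity of $C_{i}$ --- is where your attempt and the paper genuinely diverge, and you are the more careful party. The paper's proof stops at ``$\Pi_{\alpha}Z < \mathbb{E}[Z]$, hence there exists $\lambda \in (0,1]$,'' which is a non sequitur: that inequality alone does not place the ratio in $(0,1]$. Concretely, if $Z_{i}$ puts mass $1/2$ on $-1$ and $1/2$ on $3$, then $\mathbb{E}[Z_{i}] = 1$ while $\operatorname{CVaR}_{1/2} = -1$, giving $\lambda = -1$; and if $\mathbb{E}[Z_{i}] < 0$, the same inequality forces $\lambda > 1$. So Proposition~\ref{PropExpected} is simply false for general return distributions, and a positivity restriction of the kind you impose (nonnegative support with positive mean) is necessary, not a blemish of your argument. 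One caveat on your suggested repair: a constant shift of rewards justified by Assumption~\ref{assumption2} moves $\operatorname{CVaR}$ and the mean by the same additive constant, so it makes the \emph{shifted} quantities satisfy the multiplicative identity but does not transfer it back to the original $C_{i}$ and $\mathbb{E}[Z_{i}]$; the nonnegativity has to be assumed of the actual returns (as effectively holds in the SMAC tasks), not manufactured by shifting. With that hypothesis stated explicitly, your proof is complete and, unlike the paper's own, actually sound; the gap you identified propagates to the paper's use of this proposition in part (2) of Theorem~\ref{propOverEst}, where $C_{i} = \lambda Q_{i}$ with $\lambda \in (0,1]$ is again taken for granted.
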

\begin{proof}
We first provide that given a return distribution $Z$, return random variable $\mathscr{Z}$ and risk level $\alpha \in \mathcal{A}$, $\forall z$,  $\Pi_{\alpha}Z$ can be rewritten as $\mathbb{E}\left[\mathscr{Z} \vert \mathscr{Z} < z \right] < \mathbb{E}\left[ \mathscr{Z}\right]$. This can be easily proved by following \cite{fra}'s proof. Thus we can get $\Pi_{\alpha}Z < \mathbb{E}\left[Z \right]$, and there exists $\lambda_{(\tau_i, u^{t-1}_i)} \in (0, 1]$, which is a value of agent's trajectories, such that $\Pi_{\alpha}Z_i(\tau_i, u^{t-1}_i) = \lambda_{(\tau_i, u_i)} \mathbb{E}\left[Z_i(\tau_i, u^{t-1}_i) \right]$.
\end{proof}
Proposition \ref{PropExpected} implies that we can view the CVaR value as truncated values of Q values that are in the lower region of return distribution $Z_i(\tau_i, u_i)$. CVaR can be decomposed into two factors: $\lambda_{(\tau_i, u_i)}$ and $\mathbb{E}[Z_i(\tau_i, u_i)]$.

\PropOverEst*
\begin{proof}
We first consider the static risk level for each agent and the linear additivity mixer~\citep{sunehag2017value} of individual CVaR values.

(1) Given $\boldsymbol{\alpha}_1$ and $\boldsymbol{\alpha}_2$ and the mixer is the additivity function, we derive $\mathbb{E}[\Psi_{\alpha}]$ as follows
\begin{align}
    \mathbb{E}[\Psi_{\alpha}] &\overset{\Delta}{=} \mathbb{E}\left[r + \gamma \max_{\boldsymbol{u}^{\prime}} C^{\operatorname{tot}}\left(\boldsymbol{s}^{\prime}, \boldsymbol{u}^{\prime}\right) - \left(r + \gamma \max_{\boldsymbol{u}^{\prime}} Q\left(\boldsymbol{s}^{\prime}, \boldsymbol{u}^{\prime}\right)\right)\right] \\ 
    &= \mathbb{E}\left[\gamma \left(\max_{\boldsymbol{u}^{\prime}} C^{\operatorname{tot}}\left(\boldsymbol{s}^{\prime}, \boldsymbol{u}^{\prime}\right)-\max _{\boldsymbol{u}^{\prime}} Q\left(\boldsymbol{s}^{\prime}, \boldsymbol{u}^{\prime}\right)\right)\right] \label{eq:global_max_to_individial_max} \\
    &= \mathbb{E}\left[\gamma \left(\sum\nolimits_{i=1}^{N} \max_{u_i} C_{i} (\tau_{i}, u^{t-1}_{i}, \alpha) -\sum\nolimits_{i=1}^{N} \max_{u_i} Q_{i} (\tau_{i}, u^{t-1}_{i}) \right)\right]
\end{align}
Following Proposition \ref{PropExpected} and Eqn. \ref{eq:cvar_risk}, given $\alpha_1 \leq \alpha_2$ and for any $i \in N$ and $u \in \mathcal{U}$, we can easily derive
\begin{equation}
    C_{i} (\tau_{i}, u^{t-1}_{i}, \alpha_1) \leq C_{i} (\tau_{i}, u^{t-1}_{i}, \alpha_2)
\end{equation}
Thus,
\begin{equation}
    \max_{u_i}C_{i} (\tau_{i}, u^{t-1}_{i}, \alpha_1) \leq \max_{u_i}C_{i} (\tau_{i}, u^{t-1}_{i}, \alpha_2)
\end{equation}
Then, we can get
\begin{align}
    \mathbb{E}[\Psi_{\alpha_1}] 
    &= \mathbb{E}\left[\gamma \left(\sum\nolimits_{i=1}^{N} \max_{u_i} C_{i} (\tau_{i}, u^{t-1}_{i}, \alpha_1) -\sum\nolimits_{i=1}^{N} \max_{u_i} Q_{i} (\tau_{i}, u^{t-1}_{i}) \right)\right] \\
    &\leq \mathbb{E}\left[\gamma \left(\sum\nolimits_{i=1}^{N} \max_{u_i} C_{i} (\tau_{i}, u^{t-1}_{i}, \alpha_2) -\sum\nolimits_{i=1}^{N} \max_{u_i} Q_{i} (\tau_{i}, u^{t-1}_{i}) \right)\right] \\
    &= \mathbb{E}[\Psi_{\alpha_2}]
\end{align}
Finally, we get $\mathbb{E}[\Psi_{\alpha_1}] \leq  \mathbb{E}[\Psi_{\alpha_2}]$.

Note that, it also applies when the mixer is the monotonic network by following the proof of Theorem 1 in \citep{rashid2020monotonic}. Here we present the proof in RMIX for the convenience of readers.

With monotonicity network $f_{\operatorname{m}}$, in RMIX, we have 

\begin{equation}
    C^{\operatorname{tot}} (\boldsymbol{s}, \boldsymbol{u}) = f_{\operatorname{m}}(C_{1}(\tau_{1}, u_{1}, \alpha_{1}), \dots, C_{n}(\tau_{n}, u_{n}, \alpha_{n}))
\end{equation}
Consequently, we have 
\begin{equation}
    C^{\operatorname{tot}} (\boldsymbol{s}, \{\arg\max_{u^{\prime}} C_{i}(\tau_{i}, u^{\prime}, \alpha_{i})\}_{i=1}^{N})
    = f_{\operatorname{m}}(\{\max_{u^{\prime}} C_{i}(\tau_{i}, u^{\prime}, \alpha_{i})\}_{i=1}^{N})
\end{equation}
By the monotonocity property of $f_{\operatorname{m}}$, we can easily derive that if $j \in \{1, \dots, N\}$, $u^{*}_j=\arg\max_{u_i}C_{j}(\tau_{j}, u^{t-1}_j, \alpha_{j})$, $\alpha_{j} \in (0, 1]$ is the risk level given the current return distributions and historical return distributions, and actions of other agents are not the best action, then we have
\begin{equation}
    f_{\operatorname{m}}(\{C_{j}(\tau_{j}, u_{j}, \alpha_{j})\}_{i=1}^{N}) \leq f_{\operatorname{m}}(\{C_{j}(\tau_{j}, u_{j}, \alpha_{j})\}_{i=1,i \neq j}^{N}, C_{j}(\tau_{j}, u^{*}_{j}, \alpha_{j})).
\end{equation}
So, for all agents, $\forall j \in \{1, \dots, N\}$, $u^{*}_j=\arg\max_{u_j} C_{j}(\tau_{j}, u^{t-1}_j, \alpha_{j})$, we have
\begin{align}
    f_{\operatorname{m}}(\{C_{j}(\tau_{i}, u_{i}, \alpha_{i})\}_{i=1}^{N}) &\leq f_{\operatorname{m}}(\{C_{j}(\tau_{j}, u_{j}, \alpha_{j})\}_{i=0,i \neq j}^{n-1}, C_{j}(\tau_{j}, u^{*}_{j})) \\
    &\leq f_{\operatorname{m}}(\{C_{i}(\tau_{i}, u^{*}_{i}, \alpha_{i})\}_{i=1}^{N}) \\
    &= \max_{\{u_{i}\}_{i=1}^{N}} f_{\operatorname{m}}(\{C_{i}(\tau_{i}, u_{i}, \alpha_{i})\}_{i=1}^{N}).
\end{align}
Therefore, we can get
\begin{equation}\label{eq:mono_max}
    \max_{\{u_{i}\}_{i=1}^{N}} f_{\operatorname{m}} (\{ C_{i} (\tau_{i}, u_i, \alpha_{i})\}_{i=1}^{N}) = \max_{\boldsymbol{u}} C^{\operatorname{tot}}(\boldsymbol{s}, \boldsymbol{u}),
\end{equation}
which implies
\begin{equation}
\max_{\boldsymbol{u}} C^{\operatorname{tot}}(\boldsymbol{s}, \boldsymbol{u}) = C^{\operatorname{tot}} (\boldsymbol{s}, \{\arg\max_{u^{\prime}} C_{i}(\tau_{i}, u^{\prime})\}_{i=1}^{N}).
\end{equation}
The Eqn. \ref{eq:mono_max} can be used in Eqn. \ref{eq:global_max_to_individial_max} to derive the about results.

(2) By following Proposition \ref{PropExpected}. We can get
\begin{align}
        \mathbb{E}[\Psi_{\boldsymbol{\alpha}}] 
        &= \mathbb{E}\left[\gamma \left(\sum\nolimits_{i=1}^{N} \max_{u_i} C_{i} (\tau_{i}, u^{t-1}_{i}, \alpha_i) -\sum\nolimits_{i=1}^{N} \max_{u_i} Q_{i} (\tau_{i}, u^{t-1}_{i}) \right)\right] \\
        &= \mathbb{E}\left[\gamma \left(\sum\nolimits_{i=1}^{N} \max_{u_i} C_{i} (\tau_{i}, u^{t-1}_{i}, \alpha_2) -\sum\nolimits_{i=1}^{N} \max_{u_i} Q_{i} (\tau_{i}, u^{t-1}_{i}) \right)\right] \\
        &=\mathbb{E}\left[\gamma \left(\sum\nolimits_{i=1}^{N} \max_{u_i}\lambda_{(\tau_i, u^{t-1}_i)} Q_{i} (\tau_{i}, u^{t-1}_{i}) -\sum\nolimits_{i=1}^{N} \max_{u_i} Q_{i} (\tau_{i}, u^{t-1}_{i}) \right)\right] \\
        &=\mathbb{E}\left[\gamma \left(\sum\nolimits_{i=1}^{N} \max_{u_i}\left((\lambda_{(\tau_i, u^{t-1}_i)}-1) Q_{i}(\tau_{i}, u^{t-1}_{i})\right)  \right)\right]
\end{align}
We can get that $\exists \alpha_i \in (0, 1]$ and $i \in \{1, \dots,  N\}$, $\mathbb{E}[\Psi_{\alpha}] < 0$.
\end{proof}

\newpage
\section{Pseudo Code of RMIX}\label{AppendixAddIntroRMIX}

\begin{algorithm}
\begin{algorithmic}[1]
\REQUIRE {$K$, $\gamma$;}
\REQUIRE Initialize parameters $\theta$ of the network of agent, risk operator and monotonic mixing network;\\
\REQUIRE Initialize parameters $\bar{\theta}$ of the target network of agent, risk operator and monotonic mixing network;\\
\REQUIRE Initialize replay buffer $\mathcal{D}$;\\
\FOR{$e \in \{1, \dots, \text{MAX\_EPISODE}\}$}
    \STATE Start a new episode;\\
	\WHILE {\text{EPISODE\_IS\_NOT\_TEMINATED}}
	    \STATE Get the global state $\boldsymbol{s}^t$;\\
	    \FORALL{agent $i \in \{1, \dots, N\}$}
	        \STATE Get observation $o^{t}_{i}$ from the environment; \label{alg:startCal}
	        \STATE Get action of last step $u^{t-1}_{i}$ from the environment;
	        \STATE Estimate the local return distribution $Z^{t}_i (o^{t}_{i}, u^{t-1}_{i})$;
	        \STATE Predict the risk level $\alpha_i$ (Eqn. \ref{eq:alpha_cal});  
	        \STATE Calculate CVaR values $C_{i}^{t}\left(o^{t}_{i}, u^{t-1}_{i}, \alpha_{i}\right)$ (Eqn. \ref{eq:cvar_risk}); \label{alg:endCal}
	        \STATE Get the action $u^{t}_{i}=\arg\max_{u^{t}} C_{i}^{t}\left(o^{t}_{i}, u^{t-1}_{i}, \alpha_{i}\right)$;
	    \ENDFOR
	    
	    \STATE Concatenate $u^{t}_{i}$, $i$ $\in$ $[1, .., N]$ into $\mathbf{u}_{t}$;
	    \STATE Execute $\boldsymbol{u}^{t}_{i}$ into environment;
	    \STATE Receive global reward $r^{t}$ and observe a new state $\boldsymbol{s}^{\prime}$;
	    \STATE Store $(\boldsymbol{s}^{t}, \{o^{t}_{i}\}_{i \in [1, ..., N]}, \boldsymbol{u}^{t}, r^{t}, \boldsymbol{s}^{\prime})$ in replay buffer $\mathcal{D}$;

        \IF{UPDATE}
            \STATE Sample a min-batch $\mathcal{D}^{\prime}$ from replay buffer $\mathcal{D}$;
            \STATE For each sample in $\mathcal{D}^{\prime}$, calculate CVaR value $C_{i}$ by following steps in line 6-11;
            \STATE Concatenate CVaR values $\{[C^{1}_{1}, \dots, C^{1}_{N}]_{1}, \dots, [C^{\vert\mathcal{D}^{\prime}\vert}_{1}, \dots, C^{\vert\mathcal{D}^{\prime}\vert}_{N}]_{\vert\mathcal{D}^{\prime}\vert}\}$;
            \STATE For each $[C^{j}_{1}, \dots, C^{j}_{N}]_{0, j \in [1, \dots, \vert\mathcal{D}^{\prime}\vert]}$, calculate $C^{\operatorname{tot}}_{j}$ in the mixing network;
            \STATE Update $\theta$ by minimizing the TD loss (Eqn. \ref{eq:td_loss});
            \IF{UPDATE}
                \STATE Update the local return via QR loss (Eqn.\ref{eq:qr_loss});
            \ENDIF
            \STATE Update $\bar{\theta}$: $\bar{\theta} \leftarrow \theta$;
        \ENDIF
    \ENDWHILE
\ENDFOR
\end{algorithmic}
\caption{RMIX}\label{alg:rmix}
\end{algorithm}

\newpage
\section{SMAC Settings}\label{appendEnv}

SMAC benchmark is a challenging set of cooperative StarCraft II maps for micromanagement developed by~\cite{samvelyan19smac} built on DeepMind's PySC2~\citep{vinyals2017starcraft}. We introduce \textbf{states and observations}, \textbf{action space} and  \textbf{rewards} of SMAC, and \textbf{environmental settings of RMIX} below.

\textbf{States and Observations.} At each time step, agents receive local observations within their field of view, which contains information (distance, relative x, relative y, health, shield, and unit type) about the map within a circular area for both allied and enemy units and makes
the environment partially observable for each agent.
The global state is composed of the joint observations, which could be used during training. All features, both in the global state and in individual observations of agents, are normalized by their maximum values.

\textbf{Action Space.} actions are in the discrete space. Agents are allowed to make move[direction], attack[enemy id], stop and no-op. The no-op action is only for dead agents and it is the only legal action for them. Agents can only move in four directions: north, south, east, or west. 
The shooting range is set to for all agents. Having a larger sight range than a shooting range allows agents to make use of the move commands before starting to fire. The automatical built-in behavior of agents is also disabled for training.

\textbf{Rewards.} At each time step, the agents receive a joint reward equal to the total damage dealt on the enemy agents. In addition, agents receive a bonus of 10 points after killing each opponent, and 200 points after killing all opponents for winning the battle. The rewards are scaled so that the maximum cumulative reward achievable in each scenario is around 20.

\textbf{Environmental Settings of RMIX.} The difficulty level of the built-in game AI we use in our experiments is level 7 (very difficult) by default as many previous works did~\citep{rashid2018qmix,mahajan2019maven}. The scenarios used in Section \ref{experiments} are shown in Table \ref{someEnvs}. We present the table of all scenarios in SMAC in Table \ref{allEnvs} and the corresponding memory usage for training each scenario in Table \ref{memory}. The \emph{Ally Units} are agents trained by MARL methods and \emph{Enemy Units} are built-in game bots. For example, 5m\_vs\_6m indicates that the number of MARL agent is 5 while the number of the opponent is 6. The agent (unit) type is \textit{marine}\footnote{A type of unit (agent) in StarCraft II. Readers can refer to \url{https://liquipedia.net/starcraft2/Marine_(Legacy_of_the_Void)} for more information}. This asymmetric setting is hard for MARL methods.

\begin{table}[ht]
    \caption{SMAC Environments}\label{allEnvs}\label{someEnvs}
    \begin{center}
    \begin{tabular}{{c}{c}{c}{c}}
    \hline
        Name & Ally Units & Enemy Units & Type \\ \hline\hline
        5m\_vs\_6m & 5 Marines & 6 Marines & homogeneous \& asymmetric \\ \hline
        8m\_vs\_9m & 8 Marines & 9 Marines & homogeneous \& asymmetric \\ \hline
        10m\_vs\_11m & 10 Marines & 11 Marines & homogeneous \& asymmetric \\ \hline
        MMM2 & \thead{1 Medivac, \\ 2 Marauders \& \\ 7 Marines} & \thead{1 Medivac, \\ 3 Marauders \& \\ 8 Marines} & heterogeneous \& asymmetric \\ \hline
        1c3s5z & \thead{1 Colossi \& \\ 3 Stalkers \& \\ 5 Zealots} & \thead{1 Colossi \& \\ 3 Stalkers \& \\ 5 Zealots} & heterogeneous \& symmetric \\ \hline 
        corridor & 6 Zealots & 24 Zerglings & micro-trick: wall off \\ \hline
    \end{tabular}
    \end{center}
\end{table}

\newpage
\section{Training Details}\label{appendTrainDetails}
The baselines are listed in table \ref{baselines} as depicted below. To make a fair comparison, we use {\fontfamily{qcr}\selectfont episode} (single-process environment for training, compared with {\fontfamily{qcr}\selectfont parallel}) runner defined in PyMARL to run all methods.  The evaluation interval is $10,000$ for all methods. We use uniform probability to estimate $Z_i(\cdot, \cdot)$ for each agent. We use the other hyper parameters used for training in the original papers of all baselines. The metrics are calculated with a moving window size of 5. Experiments are carried out on NVIDIA Tesla V100 GPU 16G. We also provide memory usage of baselines (given the current size of the replay buffer) for training each scenario of SCII domain in SMAC. 
We use the same neural network architecture of agent used by QMIX~\citep{rashid2018qmix}. The trajectory embedding network $\phi_i$ is similar to the network of the agent. The last layer of the risk level predictor generates the local return distribution and shares the same weights with the last layer of the agent network.

The QR loss is minimized to periodically (empirically every 50 time steps) update the weights of the agent as simultaneously updating the local distribution and the whole network can impede training. The QR loss is used for updating the local distribution while the TD loss of centralized training is for the agent network weights learning and credit assignment. In the QR loss, the $C_i(\cdot, \cdot, \cdot)$ is considered as scalar value and gradients from $C_i(\cdot, \cdot, \cdot)$ are blocked. As the QR update relies on accurately estimation of the dummy reward $C_i$, we start to update $C_i$ when the test wining rate is over 35\%, which means the agents have grasped some strategies to win the game.

\begin{table}[ht]
\caption{Memory usage (given the current size of the replay buffer) for the training of each method (exclude COMA, which is an on-policy method without using replay buffer) on scenarios of SCII domain in SMAC.}\label{memory}
\centering
\begin{tabular}{{c}{c}}
 \hline
 Scenario & Memory Usage (GB) \\
 \hline
    5m\_vs\_6m & 3 \\
    8m\_vs\_9m & 4.9 \\
    10m\_vs\_11m & 7.1 \\
    1c3s5z & 8.6 \\
    MMM2 & 10.8 \\
    corridor & 14.4 \\
 \hline
\end{tabular}
\end{table}

\begin{table}[ht]
\caption{Baseline algorithms}\label{baselines}
\centering
\begin{tabular}{{c}{c}}
 \hline
 Algorithms & Brief Description \\
 \hline
    IQL~\citep{tampuu2017multiagent} & Independent Q-learning \\
    VDN~\citep{sunehag2017value} & Value decomposition network \\
    COMA~\citep{foerster2017counterfactual} & Counterfactual Actor-critic \\
    QMIX~\citep{rashid2018qmix} & Monotonicity Value decomposition \\
    QTRAN~\citep{son2019qtran} & Value decomposition with linear affine transform \\
    MAVEN~\citep{mahajan2019maven} & MARL with variational method for exploration \\
    QR-MIX~\citep{hu2020qr} & MARL with Centralized Distributional Q \\
    LH-IQN~\citep{lyu2020likelihood} & Likelihood Hysteretic with IQN (independent learning) \\
    Qatten~\citep{yang2020qatten} & Multi-head Attention for the estimation of the $Q^{\operatorname{tot}}$  \\
 \hline
\end{tabular}
\end{table}

\begin{table}[ht]
\caption{Hyper-parameters}\label{hyparam}
\centering
\begin{tabular}{{c}{c}}
 \hline
 hyper-parameter & Value \\
 \hline
    Batch size & 32 \\
    Replay memory size &  5,000 \\
    RMIX Optimizer & Adam \\
    Learning rate (lr) & 5e-4 \\
    Critic lr & 5e-4 \\
    RMSProp alpha & 0.99\\
    RMSProp epsilon & 0.00001 \\
    Gradient norm clip & 10 \\
    Action-selector  &  $\epsilon$-greedy \\
    $\epsilon$-start & 1.0 \\
    $\epsilon$-finish & 0.05 \\
    $\epsilon$-anneal time & 50,000 steps \\
    Target update interval & 200 \\
    Evaluation interval & 10,000 \\
    M & 35 \\
    K & 10 \\
    Runner & {\fontfamily{qcr}\selectfont episode} \\
    Discount factor ($\gamma$) & 0.99 \\
    RNN hidden dim & 64 \\
 \hline
\end{tabular}
\end{table}

\newpage
\section{Additional Results}\label{appendResults}

\paragraph{Addtional Results.} We show the test return value in Figure \ref{fig:appendix_res_2} and RMIX outperforms baseline methods as well. As shown in Figure \ref{fig:appendix_res_3}, RDN shows convincing test return value over VDN on three scenarios.

\begin{figure*}[ht]
\begin{center}
    \includegraphics[width=1\textwidth]{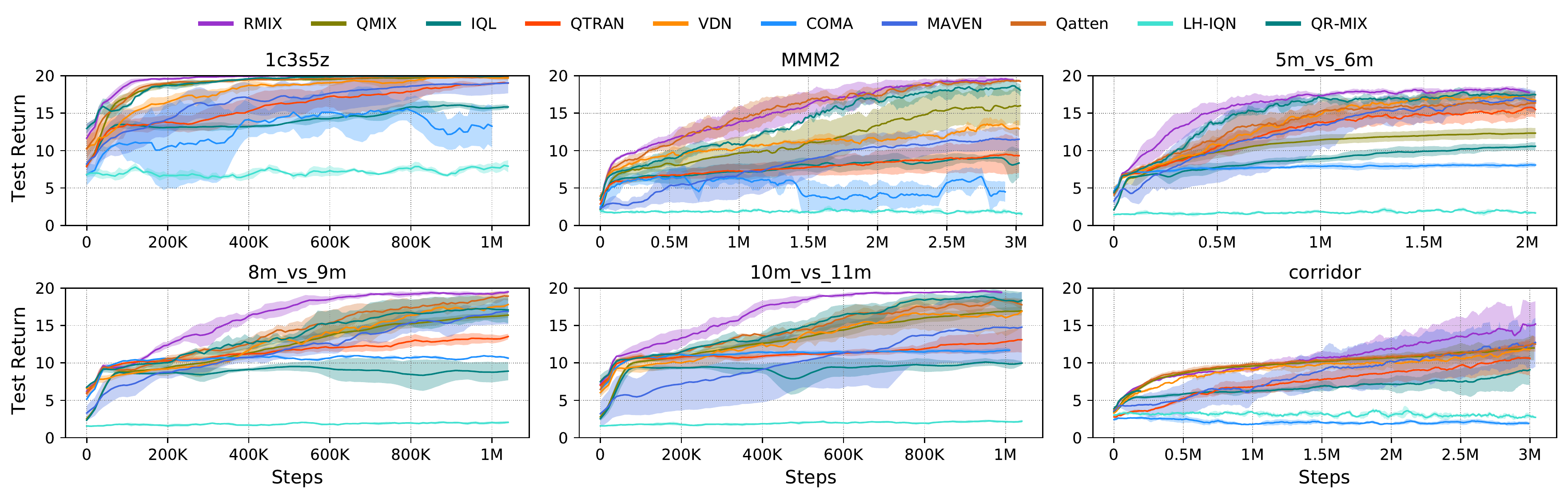}
    \vspace{-0.8cm}
    \caption{Test Return for six scenarios.}\label{fig:appendix_res_2}
 \vspace{-0.3cm}
 \end{center}
\end{figure*}

\begin{figure*}[ht]
\begin{center}
    \includegraphics[width=1\textwidth]{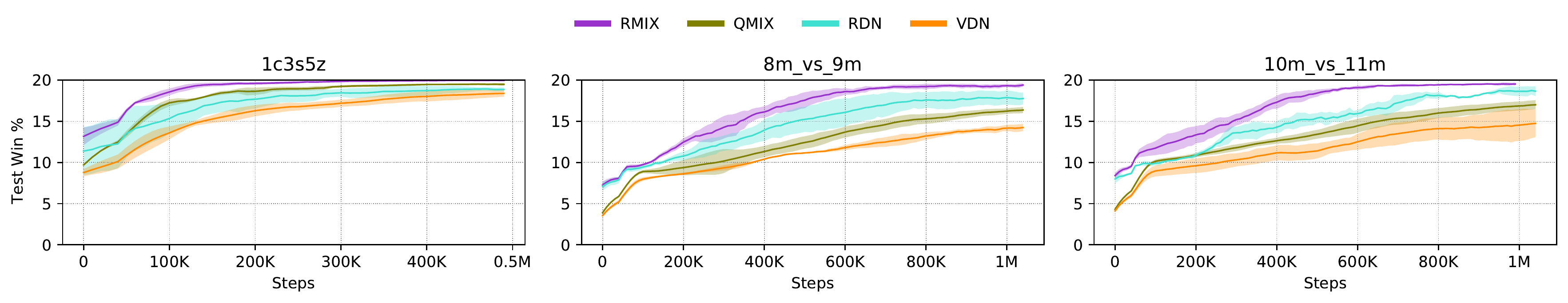}
    \vspace{-0.8cm}
    \caption{Test Return of RMIX, RDN, VDN, QMIX on three scenarios.}\label{fig:appendix_res_3}
 \vspace{-0.3cm}
 \end{center}
\end{figure*}

\end{document}